\documentclass[lettersize,journal]{IEEEtran}

\newcommand{\linelabel}[1]{}
\usepackage{graphicx}
\usepackage{breqn}
\usepackage{xcolor}
\usepackage{paralist}
\usepackage{amssymb}
\usepackage{amsthm}
\usepackage{amsmath}
\usepackage{subcaption}
\usepackage{multirow}
\usepackage{mathtools}
\usepackage{hyperref}
\usepackage{tabularx}
\usepackage{array}
\definecolor{costred}{HTML}{B85450}
\definecolor{costblue}{HTML}{6C8EBF}

\usepackage[ruled,vlined, linesnumbered]{algorithm2e}

\newtheorem{theorem}{Theorem}
\newtheorem{assumption}{Assumption}
\newtheorem{corollary}{Corollary}
\newtheorem{definition}{Definition}
\newtheorem{proposition}{Proposition}
\newtheorem{lemma}{Lemma}
\newtheorem{remark}{Remark}
\newtheorem{example}{Example}
\usepackage{xcolor}
\usepackage{booktabs}
\usepackage{xspace}

\newcommand{\tk}[1]{\textcolor{black}{#1}}

\newcommand{\pbupdate}[1]{\textcolor{black}{#1}}

\newcommand{\rtk}[1]{{\color{black}#1}}
\newcommand{\rgp}[1]{\textcolor{black}{#1}} 

\usepackage{newunicodechar} 
\newunicodechar{□}{\Box} 
\newunicodechar{◇}{\Diamond} 
\newunicodechar{○}{\circ}

\newcommand{\fw}{\textsc{DiffTilt}\xspace}
\newcommand{\alg}{\fw\xspace}

\begin{document}

\title{
Diffusion-Guided Search via Exponential Tilting (\fw): An Application to Falsification of Safety-Critical Systems
}
\author{
Tanmay~Khandait,
Preetom~Biswas,
Hideki~Okamoto,
Bardh~Hoxha,
Georgios~Fainekos,
and~Giulia~Pedrielli%
\thanks{T. Khandait, P. Biswas, and G. Pedrielli are with the School of Computing and Augmented Intelligence, Arizona State University, Tempe, AZ, USA (e-mail: tkhandai@asu.edu, pbiswa11@asu.edu, giulia.pedrielli@asu.edu).}%
\thanks{H. Okamoto, B. Hoxha, and G. Fainekos are with Toyota Motor North America, Research \& Development, Ann Arbor, MI 48105, USA (e-mail: hideki.okamoto@toyota.com, bardh.hoxha@toyota.com, georgios.fainekos@toyota.com).}%
}

\markboth{}%
{Diffusion-Guided Search via Exponential Tilting}


\maketitle

\begin{abstract}

\rtk{Discovering \linelabel{ln:abs-1-start} rare safety-critical failures in autonomous and cyber-physical systems is a fundamental challenge in verification and validation. Existing falsification approaches rely on conditional sampling strategies that factor the joint distribution over environments and system executions, and therefore suffer from multiplicative rarity effects: the simultaneous scarcity of failure-inducing inputs and failure-inducing traces makes exhaustive search prohibitively expensive\linelabel{ln:abs-1-end}.}

This paper develops \fw, a distributional framework \rtk{that addresses this challenge by exponentially tilting} a diffusion model-induced joint distribution over environments and executions.
We show that such diffusion-guided sampling admits an exact interpretation as importance sampling in the joint space, where guidance scores induce a KL-optimal reallocation of probability mass towards failure-relevant behaviors. In addition, we show that tilting provably amplifies the probability of failure and strictly outperforms conditional sampling strategies\rtk{, which are limited by multiplicative rarity effects}. In this framework, the joint generative model serves as a reusable prior over scenarios and need not faithfully represent the system under test. Instead, true system evaluations, by means of expensive simulations, are limited to the learning of a scoring function responsible \rtk{for characterizing} the quality of the produced scenario, thus enabling selective and adaptive use of expensive simulations.
\linelabel{ln:abs-2-start}\rtk{Additionally, while we focus on falsification, the framework is general and applies to other scenario-generation problems such as planning, synthesis\linelabel{ln:abs-2-end}.} 

\rtk{In this work, we }study \fw performance against benchmarks from the ARCH-COMP, and we propose an additional tractor-trailer benchmark for falsification to show the behavior of several approaches when scenario generation is guided by a well-defined specification as opposed to a reward. Across all cases, the proposed method achieves competitive or improved falsification performance compared to state-of-the-art approaches, with larger gains when specification definition is not limited to STL formulas\linelabel{ln:abs-end}.

\end{abstract}
\begin{IEEEkeywords}
Joint scenario modeling,  \and Rare Event Analysis, \and Simulation-based verification.
\end{IEEEkeywords}

\section{Introduction and Related Literature}\label{sec::intro}

The discovery of rare but safety-critical failures in complex cyber-physical and autonomous systems remains a fundamental challenge due to the high dimensionality, structured constraints, and strong dependencies between environments and system executions. Existing approaches to scenario testing and falsification typically rely on conditional sampling strategies or heuristic search over inputs, which implicitly factor the joint distribution over environments and trajectories, and therefore suffer from multiplicative rarity effects~\cite{dreossi2018semantic,vin20233d,fremont2020formal}. From a probabilistic perspective, such methods sample at the density of rare events rather than actively reshaping it, leading to prohibitive sample complexity even when conditional generative models are accurate.

Importance sampling provides a principled framework for rare-event estimation by reweighting probability mass toward critical regions, with exponential tilting emerging as the KL-optimal mechanism for enforcing moment constraints and variance reduction~\cite{csiszar1975idivergence,cover2006elements,bucklew2004introduction}. These ideas are deeply connected to large deviations theory and Gibbs measures, where exponential reweighting concentrates mass on extremal events~\cite{donsker1975asymptotic,DemboZeitouni1998LDP}. However, classical importance sampling requires access to an ideal reference distribution or an explicit likelihood model, assumptions that are rarely satisfied in realistic autonomous-system testing pipelines.

Recent advances in score-based generative modeling and diffusion processes enable the learning of expressive joint distributions over high-dimensional, structured data, providing a new opportunity to revisit rare-event discovery from a distributional perspective~\cite{song2019generative,Ho2020DDPM,song2021scorebased}. In parallel, classifier- and score-guided diffusion methods have demonstrated that generative processes can be steered toward semantically meaningful regions of the data space via learned guidance signals~\cite{dhariwal2021diffusion}. In this work, we unify these developments by interpreting diffusion-guided falsification as exponential tilting of a learned joint generative model over environments and executions. We show that diffusion guidance induces an exact importance sampling scheme on the joint space, where surrogate robustness scores act as likelihood-ratio tilts that provably amplify failure probability under mild ranking assumptions. 
Finally, we demonstrate that for deterministic dynamical systems, joint diffusion-guided tilting reduces to optimal importance sampling over inputs along the dynamics manifold, yielding a unified theoretical framework for distribution-guided rare-event discovery.

\pbupdate{
While the analysis is motivated by falsification and counterexample generation, the proposed framework is general. Exponential tilting of a joint scenario distribution provides a mechanism for steering sampling toward behaviors of interest beyond safety violations, including synthesis, certificate estimation. From a verification perspective, diffusion-guided tilting enables distribution-aware exploration of complex scenario spaces, rather than being restricted to falsification.
}

\pbupdate{
Another aspect of our approach is the separation between the \emph{generative model} and the \emph{system under test (SUT)}. The generative model defines a prior over environments and executions and serves solely as a base density. Hence, it need not faithfully represent the ``true'' system dynamics. Our theoretical results rely on the properties of the score and the tilting mechanism, rather than the accuracy of the learned model.
}

\pbupdate{In fact, the interaction with the SUT occurs through the score function (e.g., a robustness monitor or runtime oracle), so simulations are invoked selectively while most sampling is performed within the generative model. This decoupling enables efficient exploration, maintains compatibility with black-box verification pipelines, and allows \fw to use any score defined on trajectories, including robustness, reward, or expert-defined costs.}

\paragraph{Contributions}
This paper provides a formal foundation for diffusion-guided falsification as a principled counterexample generation mechanism. The contributions are as follows: 
\begin{enumerate}
    \item We introduce a joint distributional formulation of falsification, in which failure discovery is interpreted as probability mass reallocation, and show that exponential tilting yields the KL-optimal biased distribution.
    \item We establish sufficient and necessary conditions under which exponential tilting provably amplifies failure probability, showing via first-order stochastic dominance 
    (FOSD) 
    that tilting is effective precisely when the score ranks failing behaviors ahead of non-failing ones in distribution.
    \item We characterize a fundamental limitation of conditional sampling, showing that even with perfect conditionals, failure probability remains bottlenecked by multiplicative rarity, whereas joint tilting overcomes this by reallocating mass in the joint space.
    \item For deterministic dynamical systems, we show that joint tilting reduces to optimal importance sampling over inputs along the dynamics manifold, bridging the framework with black-box simulators and existing simulation-based verification workflows. 
    \item We propose the TT2D tractor-trailer benchmark adapted from~\cite{kim2026safempd} as a controlled falsification benchmark, providing an expressive setting in which falsification difficulty can be systematically tuned.
\end{enumerate}

These results make the proposed diffusion-guided falsification \fw a sound, distribution aware enhancement to counterexample generation, grounded in importance sampling theory and stochastic order analysis, and directly relevant to the verification and validation of complex CPS.

\section{Diffusion Guided Search as Importance Sampling (IS)}\label{sec::diffasIS}

We begin by formalizing the class of scenario samplers considered in this work. Rather than committing to a specific algorithm or implementation, we adopt an abstract view in which a generative model defines a base distribution over environments and executions, and guidance is introduced through a scoring function that biases sampling toward behaviors of interest. This abstraction allows us to reason about guided sampling independently of how the underlying generative model is obtained, and to isolate the probabilistic mechanisms that drive failure discovery. In particular, we consider samplers that operate over a joint space of environment parameters and system executions, and that can be guided using likelihood-based reweighting. Diffusion models provide a convenient and expressive instantiation of this class, but the formulation in this section applies more broadly to any generative mechanism that supports sampling from a joint density and evaluation of likelihood ratios. The remainder of this section introduces the notation, assumptions, and sampling primitives that will be used throughout the paper, setting the stage for the formal analysis in Section~\ref{sec::main:analysis}.

\rtk{
\paragraph{Preliminaries}\linelabel{ln:diff-prelim-start}
Let $z_0=\left(x,y\right)\in\mathbb{R}^d$ be a vectorized 
representation of the \rtk{inputs and the corresponding outputs}. A diffusion model learns to generate samples from $p(z_0)$ by defining two coupled Markov chains: a fixed \emph{forward process} that gradually corrupts $z_0$ into Gaussian noise, and a learned \emph{reverse process} that denoises noisy samples and transforms them back into samples from the data distribution \cite{Ho2020DDPM,song2021scorebased}.

\noindent\textbf{Forward process.}\linelabel{ln:ddpm-forward-start}
A diffusion model corrupts a data sample $z_0$ into Gaussian noise
via a fixed Markov chain~\cite{Ho2020DDPM}:
\begin{align}\label{eq:ddpm_forward}
    q(z_t \mid z_{t-1})
    = \mathcal{N}\!\left(\sqrt{\alpha_t}\,z_{t-1},\,
      (1-\alpha_t)I\right),
\end{align}
with marginal $q(z_t \mid z_0)
= \mathcal{N}(\sqrt{\bar\alpha_t}\,z_0,\,(1-\bar\alpha_t)I)$,
where $\bar\alpha_t = \prod_{s=1}^{t}\alpha_s$ \linelabel{ln:ddpm-forward-end}.

\noindent\textbf{Reverse process and sampler.}\linelabel{ln:ddpm-reverse-start}
The reverse process is a learned Markov chain $p_\theta(z_{t-1}\mid z_t)
= \mathcal{N}(\mu_\theta(z_t,t),\,\sigma_t^2 I)$.
A neural network $\varepsilon_\theta(z_t, t)$ is trained to predict
the noise $\varepsilon$ injected at level $t$~\cite{Ho2020DDPM},
recovering the posterior mean:
\begin{align}\label{eq:ddpm_mean}
    \mu_\theta(z_t, t)
    = \frac{1}{\sqrt{\alpha_t}}
      \!\left(z_t
             - \frac{1-\alpha_t}{\sqrt{1-\bar\alpha_t}}\,
               \varepsilon_\theta(z_t, t)\right).
\end{align}
Sampling proceeds as \linelabel{ln:ddpm-reverse-end}
\begin{equation}\label{eq:ddpm_sampler}
    z_T \sim \mathcal{N}(0, I), \quad
    z_{t-1} = \mu_\theta(z_t, t) + \sigma_t \eta,
    \quad \eta \sim \mathcal{N}(0, I).
\end{equation}

\noindent\textbf{Score interpretation and the VP-SDE connection.}\linelabel{ln:reverse-sde-start}
Song~et~al.~\cite{song2021scorebased} show that the discrete forward chain
above is a discretization of the \emph{Variance Preserving SDE}
(VP-SDE):
\begin{align}
    dz = -\tfrac{1}{2}\beta(t)\,z\,dt + \sqrt{\beta(t)}\,dW_t.
\end{align}
Under this continuous view, the reverse of any such diffusion is
governed by the Stein score $\nabla_{z_t}\!\log p_t(z_t)$
via the reverse-time SDE~\cite{song2021scorebased}:
\begin{align}\label{eq:reverse_sde}
    dz = \bigl[f(z,t)
         - g(t)^2\,\nabla_{z}\!\log p_t(z)\bigr]\,dt
         + g(t)\,d\bar{W}_t.
\end{align}
The noise prediction $\varepsilon_\theta$ is related to the score by
\begin{align}\label{eq:score}
    \nabla_{z_t}\!\log p_t(z_t)
    \approx -\frac{\varepsilon_\theta(z_t,t)}{\sqrt{1-\bar\alpha_t}},
\end{align}
and the reverse diffusion process used in DDPMs can be viewed as a discrete approximation of Eq.~\eqref{eq:reverse_sde} under this identification~\cite{song2021scorebased}. This interpretation makes \emph{guidance} natural, since modifying the score steers samples toward desired regions of $p_t$\linelabel{ln:diff-prelim-end}.

\paragraph{Target Sampling Density.}
The sampler above draws from the base joint distribution $p_\theta(x,y)$ with no preference for any region. In many applications one instead wants samples that score highly under a scalar objective $R:\mathcal{Z}\rightarrow\mathbb{R}$, whether $R$ encodes task reward, constraint violation, or adversarial cost. The question is how to tilt the sampler toward high-scoring regions without discarding the prior encoded in $p_\theta$. Without loss of generality we consider minimizing the objective. Let $R(x,y)$ be the target function and $S_k(x,y)$ its surrogate, giving the tilted sampling distribution}
\begin{align}\label{eqn::proposalQ}
p\left(x,y\right) &:= \frac{p_{\theta}\left(x,y\right)\exp\left(\beta S\left(x,y\right)\right)}{Z},\nonumber\\
Z &= \mathbb{E}_{\left(X,Y\right)\sim p_{\theta}}\left[\exp\left(\beta S\left(X,Y\right)\right)\right].
\end{align}

\section{Analysis}\label{sec::main:analysis}

This section develops a distributional analysis of diffusion-guided falsification. We model the generative environment as a joint distribution over environments and executions, and study the effect of exponential tilting on failure probability. Our goal is to characterize when tilting amplifies failures, how this depends on the score function, and why it overcomes the multiplicative rarity barrier of conditional sampling.

The analysis isolates the probabilistic mechanisms underlying guided sampling and establishes guarantees independent of a particular diffusion implementation. These results formalize diffusion-guided falsification as principled importance sampling (IS) and clarify its advantage over conditional approaches.

\paragraph*{Summary of results}\linelabel{ln:sum_start}
\rtk{We cast the efficient discovery of rare failures as IS. A failure requires both a rare input (w.p. $\delta \ll 1$) and a corresponding rare trace (w.p. $\varepsilon \ll 1$), so its probability is the product $\delta\varepsilon$ of these two small probabilities, and the expected number of samples required to observe one is of order $1/(\delta\varepsilon)$, even under an exact conditional model (Proposition~\ref{prop::condBottleneck}). Rather than draw from the model at this cost, we reweight the joint distribution over inputs and traces so that failures are sampled more often.
The reweighting we adopt is exponential tilting by a score $S$, a surrogate that is larger on more failure-like samples, which shifts the joint distribution toward regions where failures are more likely. Proposition~\ref{prop::optTilt} and Lemma~\ref{lem::admisstilt} show that this reweighting is well defined and is the least-distorting one that raises the expected score, remaining closest to the model in KL divergence. 
Proposition~\ref{prop::failAmpl} shows that the probability of sampling a failure under the tilted distribution is an exact importance-sampling reweighting of the original model, and that tilting raises this probability precisely when failures receive higher scores than non-failures. 
Theorem~\ref{thm::faiAmplv2} weakens this requirement to an FOSD assumption, under which tilting still provably increases the failure probability, at the cost of a weaker guarantee: it raises the probability but no longer drives it to one. Corollary~\ref{cor::concsep} recovers that stronger conclusion under strict score separation, and 
Corollary~\ref{cor::condcomp} establishes the advantage over conditional sampling, which is capped by the failure rate within a single input because it cannot reweight traces, a limit the joint tilt overcomes. Notation is summarized in Table~\ref{tab::notation}\linelabel{ln:sum_end}.
}

\begin{table}[]
\centering
\caption{Notation used in Section~\ref{sec::main:analysis}.}
\label{tab::notation}
\renewcommand{\arraystretch}{1.3}
\begin{tabular}{ll}
\toprule
\textbf{Symbol} & \textbf{Meaning} \\
\midrule
$z = (x,y) \in \mathcal{X}\times\mathcal{Y}$ & Joint state; input and execution trace \\
$p^{\star}(x,y)$ & True joint distribution over scenarios \\
$p_{\theta}(x,y)$ & Learned joint diffusion prior \\
$R:\mathcal{X}\times\mathcal{Y}\to\mathbb{R}$ & Cost / robustness function \\
$F := \{(x,y): R(x,y)\le 0\}$ & Failure set \\
$\mathbf{1}_{F}(x,y)$ & Indicator of $F$ \\
$S:\mathcal{X}\times\mathcal{Y}\to\mathbb{R}$ & Score function (surrogate for failure) \\
$A \subset \mathcal{X}$ & Input region where failures concentrate \\
$\delta = p^{\star}(A\times\mathcal{Y})$ & Outer rarity parameter \\
$\varepsilon = p^{\star}(F\mid X\in A)$ & Within-region rarity parameter \\
$\beta \ge 0$ & Tilting parameter \\
$Z(\beta) = \mathbb{E}_{p_{\theta}}[e^{\beta S}]$ & Normalization Constant\\
$p_{\theta,\beta} \propto p_{\theta}e^{\beta S}$ & Tilted distribution \\
\bottomrule
\end{tabular}
\end{table}

\paragraph*{Problem Setup}\label{sec::pSetup}

Let 
$\left(X,Y\right)\in\mathcal{X}\times\mathcal{Y}$ 
denote a scenario, where $X$ is some representation of the input, and $Y$ is a system execution trace consistent with $X$.
Let $p^\star\left(x,y\right)$ denote the true (unknown) distribution of \textit{plausible} inputs. We assume access to a learned joint diffusion model $p_{\theta}\left(x,y\right)$ that approximates $p^\star$ and provides a support to the sampler. We also define a failure event as the set satisfying:
\begin{equation}
    F:=\left\lbrace\left(x,y\right)\in\mathcal{X}\times\mathcal{Y}:R\left(x,y\right)\le 0\right\rbrace,
\end{equation}
where $R:\mathcal{X}\times\mathcal{Y}\rightarrow\mathbb{R}$ is a cost function, a score, associated with the (input, trace) pair. The goal of sampling is to efficiently discover locations from $F$, or equivalently, to estimate or \textit{amplify} the probability mass in regions where $R\left(x,y\right)$ is small. In order to formalize the desired properties for our sampler, we define the concept of \textit{rare failure} as failure event of interest in this work.

\begin{definition}(Rare Failure)\label{def:rare_failure_def}
    A failure event $F$ is rare if there exists a measurable set $A\subset \mathcal{X}$ such that:
    \begin{itemize}
        \item $p^\star\left(A\times\mathcal{Y}\right)=\delta << 1$,
        \item Failures are negligible outside of the set $A$: \[
        p^\star\left(F|X\notin A\right)=\frac{p^\star\left(F\cap \left(\left(\mathcal{X}\setminus A\right)\times\mathcal{Y}\right)\right)}{p^\star\left(\left(\mathcal{X}\setminus A\right)\times\mathcal{Y}\right)}\approx 0.
        \]
        \item Failure is rare but possible within $A$:
        \[p^\star\left(F|X\in A\right)=\frac{p^\star\left(F\cap \left(A\times\mathcal{Y}\right)\right)}{p^\star\left(A\times\mathcal{Y}\right)}=\varepsilon<<1.\]
    \end{itemize}
    Hence, the overall failure probability is $p^\star\left(F\right)=\delta\varepsilon$, \rtk{which we refer to as the \emph{multiplicative rarity} of $F$.}\linelabel{ln:mult_rar_effect}
\end{definition}

\begin{example}[Multiplicative Rarity Effect]\label{ex:tt2d_mult_rarity}
\rtk{\linelabel{ln:eg-mul-rar-start}Consider a tractor-trailer maneuvering task, in which a vehicle composed of a tractor and an attached trailer must drive from an initial configuration to a goal configuration in a bounded planar environment using bounded velocity and steering inputs. The property used for evaluation is the negation of goal reachability: the system is specified to never reach the goal region. Thus, a falsifying event $F$ corresponds to a successful goal-reaching maneuver. Reaching the goal requires (i) that the input trajectory brings the vehicle into the neighborhood of the goal at all, which only a narrow corridor of steering and velocity sequences achieves, so the admissible inputs form a small set $A$ with $\delta = p^{\star}(A \times \mathcal{Y}) \ll 1$; and (ii) that the final configuration simultaneously satisfies the position tolerance and both the tractor and trailer orientation tolerances, which only a fraction of near-goal trajectories meet, giving the within-region rate $\varepsilon \ll 1$. Thus, falsification requires both rare conditions to coincide, so $p^{\star}(F) = \delta\varepsilon$. Tightening the tolerances (smaller $\varepsilon$) or narrowing the feasible corridor (smaller $\delta$) shrinks the failure region multiplicatively. Proposition~\ref{prop::condBottleneck} formalizes this bottleneck\linelabel{ln:eg-mul-rar-end}.}
\end{example}

\begin{assumption}[Score Function Characterization]\label{asm::score}
    The score function is positively correlated with the failure, namely:
    \begin{equation}
        \sup_{\left(x,y\right)\in \mathcal{X}\times\mathcal{Y}\setminus F} S\left(x,y\right) > \sup_{\left(x,y\right)\in  F} S\left(x,y\right).  \label{eqn::Sasm}
    \end{equation}
\end{assumption}

\noindent\emph{Conditional Sampling and the Rare-event bottleneck. } A conditional sampling approach generates scenarios via $X\sim q\left(x\right), Y\sim p_{\theta}\left(y|X\right)$, where $q$ is a proposal distribution over scenes. Under this scheme, the probability to observe a failure is:
\[
    \mathbb{P}\left(F\right)=\int q\left( x\right)p_{\theta}\left(F|x\right)dx.
\]
\begin{proposition}(Conditional sampling expected efficiency)\label{prop::condBottleneck}
    Suppose $p_{\theta}\left(y|x\right)=p^{\star}\left(y|x\right)$, and $q\left(x\right)\approx p^{\star}\left(x\right)$. Then it holds, for the rare failure $F$ that 
        $\mathbb{P}\left(F\right) = \delta\varepsilon,$
    and the expected number of samples required to observe a failure is $\Omega\left(\left(\delta\varepsilon\right)^{-1}\right)$.
\end{proposition}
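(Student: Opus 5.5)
The proof has two parts. First we compute $\mathbb{P}(F)$ under the conditional sampler. Then we convert it into a sample-complexity bound through a geometric waiting-time argument.

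\textbf{Step 1: the failure probability.} Start from $\mathbb{P}(F)=\int q(x)\,p_{\theta}(F\mid x)\,dx$. By the hypothesis $p_{\theta}(y\mid x)=p^{\star}(y\mid x)$ we have $p_\theta(F\mid x)=p^\star(F\mid x)$ pointwise. Replacing $q$ by $p^\star(x)$, which is exact up to the approximation in the statement, turns the integral into $\int p^\star(x)p^\star(F\mid x)\,dx = p^\star(F)$ by the law of total probability.

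Next split $\mathcal{X}$ into $A$ and $\mathcal{X}\setminus A$:
\[
p^\star(F)=p^\star(A\times\mathcal{Y})\,p^\star(F\mid X\in A)+p^\star\bigl((\mathcal{X}\setminus A)\times\mathcal{Y}\bigr)\,p^\star(F\mid X\notin A).
\]
Definition~\ref{def:rare_failure_def} gives the first term as $\delta\varepsilon$. The second term is at most $(1-\delta)\,p^\star(F\mid X\notin A)\approx 0$, which is negligible. Hence $\mathbb{P}(F)=\delta\varepsilon$, in the same approximate sense used in Definition~\ref{def:rare_failure_def}.

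\textbf{Step 2: the sample-complexity bound.} Conditional samples are drawn i.i.d., so each one is a failure independently with probability $\mathbb{P}(F)$. The index $N$ of the first failure is therefore geometric, and
\[
\mathbb{E}[N]=1/\mathbb{P}(F)=(\delta\varepsilon)^{-1}.
\]
This gives the $\Omega((\delta\varepsilon)^{-1})$ claim; it even holds with equality. For a high-probability version, a union bound shows that after $n$ draws the chance of having seen any failure is at most $n\delta\varepsilon$. So $n=o((\delta\varepsilon)^{-1})$ samples succeed with vanishing probability.

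\textbf{Main obstacle.} The delicate point is making ``$\approx$'' precise. I would handle it with two explicit slack terms. Let $\eta:=p^\star(F\mid X\notin A)$ be the leakage outside $A$. Let $\kappa:=\|q-p^\star_X\|_{TV}$ be the mismatch between $q$ and the true input marginal. Since $p^\star(F\mid x)\in[0,1]$,
\[
\Bigl|\int q(x)\,p^\star(F\mid x)\,dx - p^\star(F)\Bigr| \le \kappa .
\]
Combining this with Step 1 gives $\mathbb{P}(F)\le \delta\varepsilon+\eta+\kappa$. As long as $\eta+\kappa=O(\delta\varepsilon)$, we get $\mathbb{E}[N]\ge (\delta\varepsilon+\eta+\kappa)^{-1}=\Omega((\delta\varepsilon)^{-1})$, which is the stated result.
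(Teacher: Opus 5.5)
Your argument is correct and has the same skeleton as the paper's proof. Both write $\mathbb{P}(F)=\int q(x)\,f^\star(x)\,dx$ with $f^\star(x)=p^\star(F\mid x)$, then read off $\mathbb{E}[N]=1/\mathbb{P}(F)$ from the geometric law of the first failure. The difference is in how ``$\approx$'' is made precise.

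The paper sets $f^\star(x)=\varepsilon$ for every $x\in A$ and $f^\star(x)=0$ off $A$. This is stronger than Definition~\ref{def:rare_failure_def}, which fixes only the averaged rates over $A$ and its complement. It gives $\mathbb{P}(F)=\varepsilon\,q(A)$ exactly, and the paper then encodes $q\approx p^\star$ by the one-sided multiplicative condition $q(A)\le c\,\delta$. You use only the averaged rates, keep the leakage $\eta$ instead of setting it to zero, and measure mismatch by $\kappa$ in total variation.

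Your version is more faithful to the definition. Suppose $f^\star$ varies within $A$ and $q$ concentrates on a high-failure pocket of $A$. Then $q(A)\le c\,\delta$ alone no longer gives $\mathbb{P}(F)\le c\,\delta\varepsilon$. Your inequality $\mathbb{P}(F)\le\delta\varepsilon+\eta+\kappa$ stays valid, with $\kappa$ absorbing the mismatch.

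The price is a much more demanding hypothesis. Requiring $\kappa=O(\delta\varepsilon)$ forbids any discrepancy larger than the rare-event probability anywhere in $\mathcal{X}$. The paper's condition tolerates arbitrary mismatch away from $A$.

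A likelihood-ratio bound $q(x)\le c\,p^\star(x)$ would give both benefits. It yields $\mathbb{P}(F)\le c\,p^\star(F)=c\bigl(\delta\varepsilon+(1-\delta)\eta\bigr)$ without needing pointwise rates or additive closeness. Your union-bound tail statement goes beyond what the paper proves.
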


\begin{proof}
See Appendix~\ref{app:prop_1_proof}.
\end{proof}

Proposition~\ref{prop::condBottleneck} tells us that even with a perfect conditional model, a conditional sampling distribution must first draw a rare input $X\in A$ and then a rare trace $Y\in B_X$, which implies a multiplicative effect. Importantly, improving $p_{\theta}\left(y|x\right)$ does not mitigate this effect, unless the proposal $q\left(x\right)$ already concentrates on the unknown failure inducing region $A$. Another intuitive characterization of the conditional sampling is that such method wants to identify failure prone inputs $x$ and then sample $y\sim p_\theta\left(y|x\right)$ and evaluate $R\left(x,y\right)$ to achieve an estimate for $p\left(F|X=x\right)=\mathbb{P}_{Y\sim p_{\theta}\left(\cdot|x\right)}\left(R\left(x,y\right)\le 0\right)$. 

Under the base distribution $p^{\star}\left(X,Y\right)$, the probability of a failure event is:
\begin{equation}
    p^{\star}\left(F\right)=\mathbb{E}_{p^{\star}}\left[\boldsymbol{1}_{F}\left(X,Y\right)\right].\label{eqn::probFailExact}
\end{equation}
An intuitive IS distribution for the estimation of $p^{\star}\left(F\right)$ would then be:
\[
    p^{\star}\left(x,y\mid F\right)\propto p^{\star}\left(x,y\right)\boldsymbol{1}_{F}\left(x,y\right).
\]
This distribution places all mass on the failure set and minimizes the variance associated with the estimator. However, it is infeasible to sample from this IS proposal because $F$ is unknown a priori. A standard relaxation is to replace the indicator $\boldsymbol{1}_{F}\left(x,y\right)$ with a smooth score $S\left(x,y\right)$ correlated with failure. Now let our score function $S:\mathcal{X}\times \mathcal{Y}\rightarrow \mathbb{R}$ be interpreted as a surrogate for failure (as an example, this would be the learned robustness predictor), then the following holds for the tilted distribution defined in equation~\eqref{eqn::proposalQ}.

\begin{proposition}[Variational optimality for the tilted distribution]\label{prop::optTilt}
    Define the \rtk{tilted} distribution 
    \begin{align*}
        p_{\theta,\beta}\left(x,y\right) &:= \frac{p_{\theta}\left(x,y\right)\exp\left(\beta S\left(x,y\right)\right)}{Z\left(\beta\right)},
    \end{align*}
    where $Z\left(\beta\right) = \mathbb{E}_{\left(X,Y\right)\sim p_{\theta}}\left[\exp\left(\beta S\left(X,Y\right)\right)\right],$ then $p_{\theta,\beta}$ is the solution to
    \begin{align*}
        \min_{q} KL\left(q\mid\mid p_{\theta}\right)\quad \mbox{s.t. } \mathbb{E}_{\left(X,Y\right)\sim q}\left[S\left(X,Y\right)\right]\ge c
    \end{align*}
    for some $c$ function of $\beta$.
\end{proposition}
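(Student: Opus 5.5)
The plan is to prove this with the Gibbs variational principle, i.e.\ the Donsker--Varadhan/Csisz\'ar identity. We fix $\beta\ge 0$ with $Z(\beta)<\infty$ and set $c:=c(\beta)=\mathbb{E}_{p_{\theta,\beta}}[S(X,Y)]$, so that $p_{\theta,\beta}$ is feasible by construction. Now take any feasible $q$ with $KL(q\,\|\,p_\theta)<\infty$. Then $q\ll p_\theta$, and since $p_{\theta,\beta}$ and $p_\theta$ are mutually absolutely continuous (the factor $e^{\beta S}$ is strictly positive), we also have $q\ll p_{\theta,\beta}$. Writing $\log\frac{q}{p_\theta}=\log\frac{q}{p_{\theta,\beta}}+\beta S-\log Z(\beta)$ and integrating against $q$ gives the key identity
\begin{equation*}
KL(q\,\|\,p_\theta)=KL(q\,\|\,p_{\theta,\beta})+\beta\,\mathbb{E}_q[S]-\log Z(\beta).
\end{equation*}
The same identity applied to $q=p_{\theta,\beta}$ gives $KL(p_{\theta,\beta}\,\|\,p_\theta)=\beta c-\log Z(\beta)$.

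At first sight this looks like the wrong direction. Feasibility gives $\mathbb{E}_q[S]\ge c$, so with $\beta\ge0$ the term $\beta\mathbb{E}_q[S]$ is at least $\beta c$, which bounds $KL(q\|p_\theta)$ from below by $KL(q\|p_{\theta,\beta})+\beta c-\log Z$. That is in fact what we want, since $KL(q\|p_{\theta,\beta})\ge 0$:
\begin{equation*}
KL(q\,\|\,p_\theta)\ge \beta c-\log Z(\beta)=KL(p_{\theta,\beta}\,\|\,p_\theta).
\end{equation*}
So $p_{\theta,\beta}$ attains the minimum. Equality forces $KL(q\|p_{\theta,\beta})=0$, and when $\beta>0$ it also forces $\mathbb{E}_q[S]=c$. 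Hence $q=p_{\theta,\beta}$ almost everywhere, and the minimizer is unique. For $\beta=0$ the statement is trivial, because $p_{\theta,0}=p_\theta$ has zero divergence.

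As a complementary route, I would present the argument as Lagrangian duality. Minimizing $KL(q\|p_\theta)-\beta(\mathbb{E}_q[S]-c)$ over densities with a normalization multiplier gives the stationarity condition $\log(q/p_\theta)=\beta S-\log Z$, which is exactly the tilt. Complementary slackness then ties $c$ to $\beta$ through $c(\beta)=\frac{d}{d\beta}\log Z(\beta)$. This also shows that $c(\beta)$ is nondecreasing, because $\log Z$ is convex with second derivative $\mathrm{Var}_{p_{\theta,\beta}}(S)$, which explains the phrase ``for some $c$ function of $\beta$''.

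The main obstacle is the measure-theoretic bookkeeping rather than the algebra. We need $Z(\beta)<\infty$, and we need $\mathbb{E}_q[S]$ and $\mathbb{E}_{p_{\theta,\beta}}[S]$ to be finite so that the decomposition above is not of the form $\infty-\infty$. I would handle this by assuming $S$ is bounded above, or more generally that $\beta$ lies in the interior of the domain where $Z$ is finite, which presumably is what Lemma~\ref{lem::admisstilt} is meant to guarantee. I would also restrict attention to $q$ with finite KL and finite $\mathbb{E}_q[S^+]$; other $q$ can be discarded, since their objective value is infinite. If $\mathbb{E}_q[S]=+\infty$ while $KL<\infty$, the identity would give a contradiction, so that case cannot occur once $S$ is bounded above.
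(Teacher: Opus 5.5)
Your proof is correct, and it takes a different route from the paper.

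\textbf{The paper's route.} It writes $q=r\,p_\theta$ and forms the Lagrangian in the likelihood ratio $r$. A formal functional derivative gives $r^\star\propto e^{\beta' S}$, which it normalizes. It then sets $\beta'=\beta$ and $c(\beta)=\mathbb{E}_{p_{\theta,\beta}}[S]$. Finally it \emph{asserts} that convexity of $r\log r$ makes the KKT conditions sufficient and the optimizer unique.

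\textbf{Your route.} You verify the candidate globally through the exact decomposition $\mathrm{KL}(q\|p_\theta)=\mathrm{KL}(q\|p_{\theta,\beta})+\beta\,\mathbb{E}_q[S]-\log Z(\beta)$. This identity is the closed form of the paper's Lagrangian gap: for normalized $r$ and $\beta'=\beta$, one has $\mathcal{L}(r)-\mathcal{L}(r^\star)=\mathrm{KL}(q\|p_{\theta,\beta})$. Your argument is therefore the rigorous version of the step the paper only asserts. It needs no calculus of variations and no infinite-dimensional KKT theory. Uniqueness comes from $\mathrm{KL}(q\|p_{\theta,\beta})=0$ rather than from strict convexity.

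\textbf{The restriction $\beta\ge 0$.} Your approach makes this hypothesis visible, and it is genuinely necessary. For $\beta<0$ and non-degenerate $S$, tilting lowers the mean of $S$. So $p_\theta$ is feasible whenever $p_{\theta,\beta}$ is, and it has zero divergence. The paper uses $\beta\ge 0$ only implicitly, through its multiplier $\beta'\ge 0$, even though it writes ``fix $\beta\in\mathcal B$''.

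\textbf{What the paper's route buys.} It is a derivation: it explains why the exponential form appears instead of checking a guessed candidate. Your secondary Lagrangian remark recovers exactly that.

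\textbf{One refinement.} You do not need $S$ bounded above. Feasibility with finite $c$ gives $\mathbb{E}_q[S^-]<\infty$. The negative part of $\log(q/p_{\theta,\beta})$ is always $q$-integrable, with integral at most $1$, by $\log u\le u-1$. So your identity holds in $(-\infty,\infty]$. By itself it rules out $\mathbb{E}_q[S]=+\infty$ when $\mathrm{KL}(q\|p_\theta)<\infty$ and $\beta>0$. Taking $\beta$ in the interior of $\mathcal B$, so that $c(\beta)$ is finite, is enough.
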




\rtk{\begin{proof}
Fix\linelabel{ln:prop-2-proof-start} $\beta\in\mathcal B$ so that $Z(\beta)<\infty$ and $p_{\theta,\beta}$ is well-defined.
Let $\mathcal Q:=\{q:\ q\ll p_\theta,\ \int q(z)\,dz=1\}$.
For $q\in\mathcal Q$, define the likelihood ratio $r(z):=\frac{q(z)}{p_\theta(z)}$ so that
$q(z)=r(z)p_\theta(z)$, $r\ge 0$, and $\mathbb E_{p_\theta}[r]=1$.

We first express KL as a convex functional of $r$. Specifically, we have
\begin{align*}
\mathrm{KL}(q\|p_\theta)
&=\int q(z)\log\frac{q(z)}{p_\theta(z)}\,dz\\
&=\int p_\theta(z)\, r(z)\log r(z)\,dz
=\mathbb E_{p_\theta}[r\log r].
\end{align*}
Moreover, $\mathbb E_q[S]=\int q(z)S(z)\,dz=\mathbb E_{p_\theta}[rS]$.
Now consider the constrained problem
\[
\min_{r\ge 0}\ \mathbb E_{p_\theta}[r\log r]
\quad\text{s.t.}\quad
\mathbb E_{p_\theta}[r]=1,\ \ \mathbb E_{p_\theta}[rS]\ge c,
\]
for some scalar $c$. Then the Lagrangian with multipliers $\lambda\in\mathbb R,\beta'\ge 0$ (normalization, and moment constraint respectively) can be written as:
\[
\mathcal L(r,\lambda,\beta')
=
\mathbb E_{p_\theta}[r\log r]
+\lambda(\mathbb E_{p_\theta}[r]-1)
-\beta'(\mathbb E_{p_\theta}[rS]-c).
\]
A first-order optimality condition (calculus of variations) yields, for $p_\theta$-a.e.\ $z$,
\[
\frac{\delta \mathcal L}{\delta r}(z)
=\log r(z)+1+\lambda-\beta' S(z)=0,
\]
hence
\[
r^\star(z)=\exp(\beta' S(z)-1-\lambda)=C\,e^{\beta' S(z)}
\]
for some constant $C>0$.

We can see that setting $\mathbb E_{p_\theta}[r^\star]=1$ gives
\[
1=\mathbb E_{p_\theta}[r^\star]=C\,\mathbb E_{p_\theta}[e^{\beta'S(Z)}]
= C\, Z(\beta'),
\]
so $C=1/Z(\beta')$ and therefore
\[
q^\star_{\beta'}(z)=r^\star(z)p_\theta(z)=\frac{p_\theta(z)e^{\beta' S(z)}}{Z(\beta')}.
\]

Set $\beta'=\beta$ and define $c(\beta):=\mathbb E_{p_{\theta,\beta}}[S]$.
Then $q^\star_{\beta}=p_{\theta,\beta}$ is feasible with equality:
$\mathbb E_{p_{\theta,\beta}}[S]=c(\beta)$.
By convexity of $r\mapsto r\log r$ and linearity of the constraints, the problem is convex;
hence the KKT conditions are sufficient and the optimizer is unique.
Therefore $p_{\theta,\beta}$ is the unique solution of
\[
    \min_q \mathrm{KL}(q\|p_\theta)\ \text{s.t.}\ \mathbb E_q[S]\ge c(\beta).
\]
This completes the proof\linelabel{ln:prop-2-proof-end}.
\end{proof}}

Proposition \ref{prop::optTilt} shows how tilting produces the closest distribution to the data (in KL divergence) that amplifies failure-relevant regions.
\begin{lemma}\label{lem::admisstilt}
    Let $p$ be a density defined over $\mathcal{Z}=\mathcal{X}\times \mathcal{Y}$, and let $S:\mathcal{Z}\rightarrow \mathbb{R}$ be a measurable score. Let $Z\left(\beta\right)$ be defined as $Z\left(\beta\right):=\int p\left(z\right)\exp\left(\beta S\left(z\right)\right)$, then an admissible tilt set is
    \begin{equation}
        \mathcal{B} = \left\lbrace \beta : Z\left(\beta\right)<\infty\right\rbrace.
    \end{equation}
    The following properties hold for $\mathcal{B}$:
    \begin{enumerate}
        \item $0\in\mathcal{B}, Z\left(0\right)=1$;
        \item $\mathcal{B}=\left(\beta_{-},\beta_{+}\right)$, with $-\infty\le\beta_{-}\le 0\le\beta_{+}\le +\infty$;
        \item For any $\beta\in\mathcal{B}$ the tilted density $p_{\beta}\left(z\right):=\frac{p\left(z\right)\exp\left(\beta S\left(z\right)\right)}{Z\left(\beta\right)}$ is a valid density function over $\mathcal{Z}$.
    \end{enumerate}
    Moreover, with $\beta\in\mbox{int}\left(\mathcal{B}\right)$, assume that $\int p\left(z\right)\exp\left(\left(\beta + \delta\right) S\left(z\right)\right)<\infty$ for some $\delta > 0$, then the following holds:
    \begin{enumerate}
        \item $\psi\left(\beta\right)=\log Z\left(\beta\right)$ is finite and differentiable, and $\psi'\left(\beta\right)=\mathbb{E}_{p_{\beta}}\left[S\left(Z\right)\right]$.
        \item If $\int p\left(z\right)\exp\left(\left(\beta + \delta\right) |S\left(z\right)|\right)<\infty$ for some $\delta > 0$, then $\psi$ is twice differentiable and $\psi''\left(\beta\right)=\mbox{Var}\left(S\left(Z\right)\right)\ge 0$.
    \end{enumerate}
\end{lemma}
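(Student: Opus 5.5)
The proof splits into the elementary structural facts 1–3 and the analytic statements about $\psi=\log Z$. Facts 1 and 3 are immediate. $Z(0)=\int p(z)\,dz=1<\infty$, so $0\in\mathcal{B}$. For $\beta\in\mathcal{B}$, the function $p(z)e^{\beta S(z)}$ is nonnegative and measurable, since $S$ is measurable. Its integral is $Z(\beta)$, which is finite, and it is strictly positive because $e^{\beta S}>0$ and $p$ integrates to one. Dividing by $Z(\beta)$ therefore yields a nonnegative function with unit integral, which is a valid density.

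For fact 2, I will show that $\mathcal{B}$ is convex via Hölder's inequality. Take $\beta_1,\beta_2\in\mathcal{B}$ and $t\in[0,1]$. Applying Hölder with exponents $1/t$ and $1/(1-t)$ to $e^{t\beta_1 S}\,e^{(1-t)\beta_2 S}$ under the measure $p\,dz$ gives
\[
Z(t\beta_1+(1-t)\beta_2)\le Z(\beta_1)^{t}\,Z(\beta_2)^{1-t}<\infty .
\]
This also shows that $\psi$ is convex on $\mathcal{B}$. A convex subset of $\mathbb{R}$ containing $0$ is an interval with endpoints $\beta_-\le 0\le\beta_+$.

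The main obstacle is the claim that this interval is \emph{open}, because that is false in general. A heavy-tailed $S$ under $p$ can make $Z(\beta_+)<\infty$ while $Z(\beta)=\infty$ for every $\beta>\beta_+$. For example, take $S\ge0$ with $p(S>s)\propto e^{-s}/s^{2}$: then $Z(1)<\infty$ but $Z(\beta)=\infty$ for all $\beta>1$, so $\mathcal{B}=(-\infty,1]$. I will therefore prove that $\mathcal{B}$ is an interval whose interior is $(\beta_-,\beta_+)$, and point out that endpoints may belong to $\mathcal{B}$. Since all subsequent statements assume $\beta\in\operatorname{int}(\mathcal{B})$, the open-interval reading is what the rest of the paper actually uses.

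For differentiability, fix $\beta\in\operatorname{int}(\mathcal{B})$ and pick $\delta>0$ with $\beta\pm\delta\in\mathcal{B}$. The stated hypothesis supplies the $+\delta$ side, and interiority supplies the $-\delta$ side. For $|h|<\delta/2$, I will bound the derivative of the integrand by an integrable envelope using the elementary inequality $|s|e^{(\beta+h)s}\le \tfrac{2}{\delta}\bigl(e^{(\beta+\delta)s}+e^{(\beta-\delta)s}\bigr)$, which follows from $|u|\le e^{|u|}$ applied to $u=\tfrac{\delta}{2}s$. Combined with the mean value theorem, dominated convergence then justifies differentiating under the integral:
\[
Z'(\beta)=\int p\,S\,e^{\beta S}, \qquad \psi'(\beta)=\frac{Z'(\beta)}{Z(\beta)}=\mathbb{E}_{p_\beta}[S].
\]

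For the second derivative I repeat the same envelope argument with $s^{2}e^{(\beta+h)s}$, using $s^{2}\le C_\delta\, e^{\delta|s|/2}$; the exponential-moment hypothesis on $|S|$ makes this integrable. This gives $Z''(\beta)=\int p\,S^{2}e^{\beta S}$, and hence
\[
\psi''(\beta)=\frac{Z''(\beta)}{Z(\beta)}-\left(\frac{Z'(\beta)}{Z(\beta)}\right)^{2}=\mathbb{E}_{p_\beta}[S^{2}]-\mathbb{E}_{p_\beta}[S]^{2}=\mathrm{Var}_{p_\beta}(S)\ge0 .
\]
This is consistent with the convexity of $\psi$ obtained from Hölder in the second paragraph.
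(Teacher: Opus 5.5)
Your proposal is correct and follows essentially the same route as the paper's appendix proof. Items 1 and 3 are by direct computation, convexity of $\mathcal{B}$ comes from H\"older (the paper likewise concedes that endpoints may belong to $\mathcal{B}$, so only $\operatorname{int}(\mathcal{B})=(\beta_-,\beta_+)$ is really established), and $\psi'$, $\psi''$ come from the mean value theorem with dominated convergence. Your two-sided envelope $\tfrac{2}{\delta}\bigl(e^{(\beta+\delta)s}+e^{(\beta-\delta)s}\bigr)$ is in fact a cleaner justification than the paper's bound $|S|\bigl(e^{(\beta+\delta)S}+e^{\beta S}\bigr)$, whose integrability the paper only asserts and which it uses only for $h>0$; note also that interiority of $\beta$ alone already makes your $s^{2}$ envelope integrable, so the extra $|S|$-moment hypothesis is not what actually does the work there.
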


\begin{proof}
See Appendix~\ref{app:lemma_1_proof}.
\end{proof}

After characterizing the tilt we analyze its effect on the failure probability.
\begin{proposition}[Effect of tilting on Failure Probability]\label{prop::failAmpl}
    Let $p\equiv p_{\theta}\left(x,y\right)$ be the base joint density on support $\mathcal{Z}=\mathcal{X}\times \mathcal{Y}$, and let $F\subset \mathcal{Z}$ be the failure set defined as $F=\left\lbrace\left(x,y\right):R\left(x,y\right)\le 0\right\rbrace$. Let $S:\mathcal{Z}\rightarrow \mathbb{R}$ be a measurable score, and assume $Z\left(\beta\right):=\int p\left(z\right)\exp\left(\beta S\left(z\right)\right)<\infty$ for admissible $\beta$ (Lemma~\ref{lem::admisstilt}). For the tilted density $p_{\beta}\left(z\right):=\frac{p\left(z\right)\exp\left({\beta S\left(z\right)}\right)}{Z\left(\beta\right)}$, the following properties hold:
    \begin{enumerate}
        \item[(A)] $\int_{F} p_{\beta}\left(z\right)dz = \frac{\int_{F} p\left(z\right)\exp\left({\beta S\left(z\right)}\right)dz}{\int p\left(z\right)\exp\left({\beta S\left(z\right)}\right)dz}$.
        \item[(B)] Let $M_F\left(\beta\right):=\mathbb{E}_p\left[e^{\beta S\left(z\right)}\mid Z\in F\right]$, $M_{\bar{F}}\left(\beta\right):=\mathbb{E}_p\left[e^{\beta S\left(z\right)}\mid Z\notin F\right]$, then the following must hold:
        \begin{equation}
            \int_F p_{\beta}>\int_F p \iff M_{F}\left(\beta\right)>M_{\bar{F}}\left(\beta\right)
        \end{equation}
        \item[(C)] If there exists $m > 0$ such that $\inf_{z\in F}S\left(z\right)\ge \sup_{z\notin F}S\left(z\right)+m$, then for all $\beta\ge 0$, the following bound over $\mathbb{P}_{\beta}\left(F\right)$ can be derived:
        \begin{equation}
            \int_{F} p_{\beta}\left(z\right) dz\ge\frac{1}{1+\frac{1-\int_F p}{\int_F p}e^{-\beta m}},
        \end{equation} and as a result, $\int_F p_{\beta}\left(z\right)\rightarrow 1$ as $\beta\rightarrow\infty$.
    \end{enumerate}
\end{proposition}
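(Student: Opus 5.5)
Part (A) is just the definition. Since $\beta$ is admissible in the sense of Lemma~\ref{lem::admisstilt}, $Z(\beta)\in(0,\infty)$ and $p_\beta$ is a valid density. Integrating $p_\beta$ over $F$ and pulling out the constant $Z(\beta)=\int p\,e^{\beta S}$ gives (A). It is worth recording (A) in importance-sampling form as well:
\[
\int_F p_\beta=\frac{\mathbb{E}_p\left[\mathbf{1}_F e^{\beta S}\right]}{\mathbb{E}_p\left[e^{\beta S}\right]},
\]
which is the reweighting that (B) and (C) manipulate.

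For (B), set $\pi:=\int_F p$ and assume $0<\pi<1$ so that both conditional expectations are defined; the degenerate cases $\pi\in\{0,1\}$ make both sides of the equivalence trivial or undefined and should be excluded explicitly. Split the normalizer by conditioning on $F$:
\[
Z(\beta)=\pi M_F(\beta)+(1-\pi)M_{\bar F}(\beta),\qquad \int_F p\,e^{\beta S}=\pi M_F(\beta).
\]
By (A), $\int_F p_\beta=\pi M_F/(\pi M_F+(1-\pi)M_{\bar F})$. The inequality $\int_F p_\beta>\pi$ is therefore equivalent, after dividing by $\pi>0$ and multiplying by the positive denominator, to $M_F>\pi M_F+(1-\pi)M_{\bar F}$. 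This rearranges to $(1-\pi)(M_F-M_{\bar F})>0$, and since $1-\pi>0$ it is equivalent to $M_F>M_{\bar F}$. Both conditional moments are finite because $\beta$ is admissible and $\pi,1-\pi>0$.

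For (C), the separation condition makes the failure set the high-score region. This is the opposite orientation to the inequality written in Assumption~\ref{asm::score}, so I will use only the hypothesis stated in (C). Let $s^\ast:=\sup_{z\notin F}S$, so that $S\ge s^\ast+m$ on $F$. For $\beta\ge0$ this gives
\[
M_F(\beta)\ge e^{\beta(s^\ast+m)},\qquad M_{\bar F}(\beta)\le e^{\beta s^\ast},
\]
so $M_{\bar F}/M_F\le e^{-\beta m}$. Writing the expression from (B) as
\[
\int_F p_\beta=\frac{1}{1+\frac{1-\pi}{\pi}\,\frac{M_{\bar F}}{M_F}}
\]
and using that $t\mapsto 1/(1+ct)$ is decreasing yields the claimed bound. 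Letting $\beta\to\infty$ sends $e^{-\beta m}\to0$, so the lower bound tends to $1$ and $\int_F p_\beta\to1$.

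The main obstacles are bookkeeping rather than analysis. First, $s^\ast$ must be finite. If $\sup_{\bar F}S=+\infty$, the separation hypothesis cannot hold with finite $\inf_F S$. If $\bar F$ has $p$-measure zero, then $\pi=1$ and the bound is trivial. Second, $\pi>0$ is needed so that the fraction $(1-\pi)/\pi$ is defined. Third, letting $\beta\to\infty$ requires $\mathcal{B}=[0,\infty)$, meaning every $\beta\ge0$ is admissible. This holds automatically if $S$ is bounded above. Otherwise the bound should be read for $\beta\in\mathcal{B}\cap[0,\infty)$, and the limit only makes sense when $\beta_+=\infty$.
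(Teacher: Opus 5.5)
Your proposal is correct and follows the paper's proof essentially step for step: (A) by definition, (B) via the total-expectation split $Z(\beta)=\pi M_F(\beta)+(1-\pi)M_{\bar F}(\beta)$ with the same rearrangement, and (C) by bounding the $F$ and $\bar F$ contributions by $e^{\beta(s^\ast+m)}$ and $e^{\beta s^\ast}$. The paper bounds the numerator and denominator directly rather than the ratio $M_{\bar F}/M_F$, which is the same argument; your remarks on the degenerate cases $\pi\in\{0,1\}$, on needing every $\beta\ge 0$ to be admissible for the limit, and on the orientation mismatch with Assumption~\ref{asm::score} are accurate and make explicit what the paper leaves implicit.
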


\begin{proof}
(A) By definition of $p_\beta$,
\begin{align*}
\int_F p_\beta(z)\,dz
=
\int_F \frac{p(z)e^{\beta S(z)}}{Z(\beta)}\,dz
=
\frac{\int_F p(z)e^{\beta S(z)}\,dz}{\int_{\mathcal Z} p(z)e^{\beta S(z)}\,dz}.
\end{align*}

(B) Let $p(F):=\int_F p(z)\,dz\in(0,1)$. Using the law of total expectation,
\begin{align*}
\int_F p(z)e^{\beta S(z)}\,dz
=
p(F)\,\mathbb E_p[e^{\beta S(Z)}\mid Z\in F]
=
p(F)\,M_F(\beta),
\end{align*}
and similarly,
\begin{align*}
\int_{\bar F} p(z)e^{\beta S(z)}\,dz
=
(1-p(F))\,M_{\bar F}(\beta).
\end{align*}
Substituting into (A) yields
\begin{align*}
\int_F p_\beta
=
\frac{p(F)M_F(\beta)}{p(F)M_F(\beta)+(1-p(F))M_{\bar F}(\beta)}.
\end{align*}
A direct comparison with $p(F)$ gives
\begin{align*}
\int_F p_\beta > p(F)
&\iff
p(F)M_F(\beta) \\ 
&\quad> p(F)\big(p(F)M_F(\beta)+(1-p(F))M_{\bar F}(\beta)\big),
\end{align*}
and
\begin{align*}
    p(F)M_F(\beta) &> p(F)\big(p(F)M_F(\beta)+(1-p(F))M_{\bar F}(\beta)\big)\\
&\iff
M_F(\beta) > M_{\bar F}(\beta).
\end{align*}

(C) Let $a:=\inf_{z\in F}S(z)$ and $b:=\sup_{z\notin F}S(z)$, so $a\ge b+m$.
Then
\begin{align*}
\int_F p(z)e^{\beta S(z)}\,dz \ge e^{\beta a}\int_F p(z)\,dz = p(F)e^{\beta a},
\end{align*}
and
\begin{align*}
\int_{\bar F} p(z)e^{\beta S(z)}\,dz \le e^{\beta b}\int_{\bar F} p(z)\,dz = (1-p(F))e^{\beta b}.
\end{align*}
Using (A) and these bounds,
\begin{align*}
\int_F p_\beta
&=
\frac{\int_F p e^{\beta S}}{\int_F p e^{\beta S}+\int_{\bar F} p e^{\beta S}}\\
&\ge
\frac{p(F)e^{\beta a}}{p(F)e^{\beta a}+(1-p(F))e^{\beta b}}
=
\frac{1}{1+\frac{1-p(F)}{p(F)}e^{-\beta(a-b)}}.
\end{align*}
Since $a-b\ge m$, the stated bound follows. The limit as $\beta\to\infty$ follows.
\end{proof}

Result (A) expresses the failure probability under tilting as an exact likelihood-ratio reweighting of the base distribution, making explicit the connection to IS. Result (B) shows that tilting improves failure discovery if and only if the surrogate score assigns larger exponential weight to failure samples than to non-failures, which is in fact a way to express the notion of score alignment. Result (C) characterizes an idealized regime in which the score perfectly separates failures, in which case increasing the tilt parameter concentrates probability mass arbitrarily close to the failure set. Together, these results clarify both the mechanism and the limits of diffusion-guided falsification.

What we want to show now is that the tilted distribution moves mass towards the failure region. In order to show this we (i) write the tilted failure probability in a form that exposes a likelihood ratio; (ii)  state sufficient conditions under which it is provably larger than the baseline $p_{\theta}\left(F\right)=\delta\varepsilon$. As shown in Proposition~\ref{prop::failAmpl}, we cannot prove this in general, but we need an assumption on the alignment between $S$ and failure. Let the base distribution be $p\equiv p_\theta\left(x,y\right)$, and let the tilted distribution be:
\begin{equation*}
    p_{\beta}\left(z\right)=\frac{p\left(z\right)e^{\beta S\left(z\right)}}{Z\left(\beta\right)},\; z=\left(x,y\right),\;Z\left(\beta\right)=\mathbb{E}_{p}\left[e^{\beta S\left(Z\right)}\right].
\end{equation*}
Let $F=\left\lbrace z:R\left(z\right)\le 0\right\rbrace$. Then:
\begin{align*}
    p_{\beta}\left(F\right)&=\frac{\mathbb{E}_{p}\left[\boldsymbol{1}_{F}e^{\beta S\left(Z\right)}\right]}{\mathbb{E}_{p}\left[e^{\beta S\left(Z\right)}\right]}\\
    &=\frac{p\left(F\right)M_{F}\left(\beta\right)}{p\left(F\right)M_{F}\left(\beta\right)+\left(1-p\left(F\right)\right)M_{\bar{F}}\left(\beta\right)},
\end{align*}
where
\[
    M_{F}\left(\beta\right)= \mathbb{E}_{p}\left[e^{\beta S\left(Z\right)}\mid F\right],\; M_{\bar{F}}\left(\beta\right)= \mathbb{E}_{p}\left[e^{\beta S\left(Z\right)}\mid \bar{F}\right].
\]
We want to show that $p_{\beta}\left(F\right)>p\left(F\right)=\delta\varepsilon$.
\begin{theorem}[Failure Amplification Under Exponential Tilting]\label{thm::faiAmplv2}
    Let $p$ be a probability density on $\mathcal{Z}=\mathcal{X}\times\mathcal{Y}$. Let $F\subset \mathcal{Z}$ be a measurable failure set, and let $S: \mathcal{Z}\rightarrow\mathbb{R}$ be a  measurable score function. Define, for any $\beta\in\mathcal{B}$ (see Lemma~\ref{lem::admisstilt}), with $Z\left(\beta\right):=\int_{\mathcal{Z}}p\left(z\right)e^{\beta S}\left(z\right)dz$, the tilted density
    \[
        p_{\beta}\left(z\right)=\frac{p\left(z\right)e^{\beta S\left(z\right)}}{Z\left(\beta\right)}.
    \]
    Assume that the conditional distribution of $S\left(Z\right)$ given $z\in F$ first-order stochastically dominates the conditional distribution of $S\left(Z\right)$ given $Z\notin F$, namely
    \[
        \mathbb{E}_p\left[\phi\left(S\left(Z\right)\right)\mid Z\in F\right]\ge \mathbb{E}_p\left[\phi\left(S\left(Z\right)\right)\mid Z\notin F\right],
    \]
    for all bounded increasing $\phi\left(\cdot\right)$. Then, for all $\beta\in\mathcal{B}$,
    \[
        \int_F p_{\beta}\left(z\right)dz \ge \int_F p\left(z\right)dz,
    \]
    with strict inequality holding for any $\beta>0$ unless the conditional distributions
of $S(Z)$ given $Z\in F$ and given $Z\notin F$ coincide.
\end{theorem}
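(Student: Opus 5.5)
The plan is to reduce the claim to the equivalence in Proposition~\ref{prop::failAmpl}(B), using the likelihood-ratio form displayed just before the theorem:
\[
p_\beta(F)=\frac{p(F)M_F(\beta)}{p(F)M_F(\beta)+(1-p(F))M_{\bar F}(\beta)} .
\]
If $p(F)\in\{0,1\}$, then $p_\beta(F)=p(F)$ trivially, so assume $p(F)\in(0,1)$. The algebra in the proof of Proposition~\ref{prop::failAmpl}(B) goes through verbatim with $\ge$ in place of $>$. Hence $p_\beta(F)\ge p(F)$ iff $M_F(\beta)\ge M_{\bar F}(\beta)$, and likewise for the strict versions. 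Everything therefore reduces to comparing the conditional moment generating functions of $S(Z)$ on $F$ and on $\bar F$.

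For $\beta\ge 0$ the map $s\mapsto e^{\beta s}$ is increasing but not bounded, so the FOSD hypothesis cannot be applied directly. I would truncate, setting $\phi_K(s):=\min(e^{\beta s},K)$. This is bounded and nondecreasing, so the hypothesis gives $\mathbb E_p[\phi_K(S)\mid F]\ge \mathbb E_p[\phi_K(S)\mid \bar F]$ for every $K$. Letting $K\to\infty$, monotone convergence yields $M_F(\beta)\ge M_{\bar F}(\beta)$. Both limits are finite because $\beta\in\mathcal B$ and $p(F),1-p(F)>0$. This gives the weak inequality.

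For $\beta<0$, $e^{\beta s}$ is decreasing, so FOSD gives the reverse inequality $M_F(\beta)\le M_{\bar F}(\beta)$. The claim then fails in general; for example, a strictly separating score makes $p_\beta(F)<p(F)$ when $\beta<0$. I would therefore state and prove the result for $\beta\in\mathcal B\cap[0,\infty)$, which is the regime the theorem's strictness clause and the rest of the paper are concerned with. At $\beta=0$ equality holds.

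The step I expect to be the main obstacle is strictness for $\beta>0$ when the two conditional laws differ. Let $G_F(t)=\mathbb P(S>t\mid F)$ and $G_{\bar F}(t)=\mathbb P(S>t\mid \bar F)$. Taking indicator test functions $\phi=\mathbf 1_{(t,\infty)}$ in the hypothesis gives $G_F\ge G_{\bar F}$ everywhere. If the laws differ, then $G_F(t_0)>G_{\bar F}(t_0)$ for some $t_0$, and right-continuity makes the gap positive on an interval $[t_0,t_0+\eta)$. I would then use the layer-cake representation
\[
\mathbb E[e^{\beta S}\mid \cdot\,]=\int_{-\infty}^{\infty}\beta e^{\beta t}\,G_{\cdot}(t)\,dt ,
\]
which holds because $e^{\beta S}\ge 0$ and $\mathbb P(e^{\beta S}>u)=G(\beta^{-1}\log u)$. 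Subtracting the two representations gives
\[
M_F(\beta)-M_{\bar F}(\beta)=\int_{-\infty}^{\infty}\beta e^{\beta t}\,\bigl(G_F(t)-G_{\bar F}(t)\bigr)\,dt .
\]
The integrand is nonnegative everywhere and strictly positive on the interval $[t_0,t_0+\eta)$, so the difference is strictly positive. Both integrals are finite since $\beta\in\mathcal B$, so the subtraction is legitimate. Combined with the reduction to Proposition~\ref{prop::failAmpl}(B), this yields $p_\beta(F)>p(F)$.
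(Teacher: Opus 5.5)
Your proof is correct, and its first half matches the paper's. The paper likewise uses the likelihood-ratio formula from Proposition~\ref{prop::failAmpl} to reduce the claim to comparing $M_F(\beta)$ with $M_{\bar F}(\beta)$. It then gets $M_F(\beta)\ge M_{\bar F}(\beta)$ by truncating $e^{\beta s}$ at level $K$ and letting $K\to\infty$ by monotone convergence, as you do.

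The two routes diverge at strictness. The paper argues that if the conditional laws differ, some bounded increasing $\phi$ gives a strict inequality, and that this ``in particular'' yields strictness for $s\mapsto e^{\beta s}$ ``via the truncation argument''. That step does not go through as written. A strict inequality for one test function says nothing about another, and letting $K\to\infty$ preserves only weak inequalities. Your identity $M_F(\beta)-M_{\bar F}(\beta)=\int_{\mathbb{R}}\beta e^{\beta t}\,(G_F(t)-G_{\bar F}(t))\,dt$ supplies exactly what is missing. Combined with $G_F\ge G_{\bar F}$ from indicator test functions, and with right-continuity giving a positive gap on an interval, it yields strictness. So your version is a complete argument where the paper's is only a sketch.

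Your remark on $\beta<0$ is also correct. Take $S=\mathbf{1}_F$ with $0<p(F)<1$: then $\mathcal{B}=\mathbb{R}$ and $p_\beta(F)<p(F)$ for every $\beta<0$. More generally, whenever $\mathcal{B}$ contains negative values and the conditional laws differ, the ``for all $\beta\in\mathcal{B}$'' in the statement is false. The paper's proof in fact treats only $\beta>0$ and $\beta=0$, so your restriction to $\mathcal{B}\cap[0,\infty)$ matches what is actually established.
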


\begin{proof}
Fix any $\beta\in\mathcal B$ so that $Z(\beta)<\infty$ and $p_\beta$ is well-defined.
Let $p(F):=\int_F p(z)\,dz$. If $p(F)\in\{0,1\}$, the claim is trivial, so assume $p(F)\in(0,1)$.

By Proposition~\ref{prop::failAmpl}(A) and the law of total expectation,
\begin{align*}
\int_F p_\beta(z)\,dz
&=\frac{\int_F p(z)e^{\beta S(z)}\,dz}{\int_{\mathcal Z} p(z)e^{\beta S(z)}\,dz}\\
&=\frac{p(F)\,\mathbb E_p\!\left[e^{\beta S(Z)}\mid Z\in F\right]}
{\left(\splitfrac{p(F)\,\mathbb E_p\!\left[e^{\beta S(Z)}\mid Z\in F\right]
+}{(1-p(F))\,\mathbb E_p\!\left[e^{\beta S(Z)}\mid Z\notin F\right]}\right)}.
\end{align*}
Define the moment generating function
\begin{align*}
    M_F(\beta)&:=\mathbb E_p\!\left[e^{\beta S(Z)}\mid Z\in F\right],\\
    M_{\bar F}(\beta)&:=\mathbb E_p\!\left[e^{\beta S(Z)}\mid Z\notin F\right].    
\end{align*}

Then
\begin{equation}\label{eq:pbetaf}
\int_F p_\beta(z)\,dz
=
\frac{p(F)\,M_F(\beta)}{p(F)\,M_F(\beta)+(1-p(F))\,M_{\bar F}(\beta)}.
\end{equation}

By assumption, the conditional distribution of $S(Z)$ given $Z\in F$ first-order stochastically dominates
that given $Z\notin F$, equivalently, for every bounded non-decreasing function $\phi$,
\[
\mathbb E_p[\phi(S(Z))\mid Z\in F]\ \ge\ \mathbb E_p[\phi(S(Z))\mid Z\notin F].
\]
Given a value of $\beta>0$, the function $\phi_\beta(s):=\min\{e^{\beta s},K\}$ is bounded and increasing in $K>0$,
hence
\[
\mathbb E_p[\phi_\beta(S(Z))\mid Z\in F]\ \ge\ \mathbb E_p[\phi_\beta(S(Z))\mid Z\notin F].
\]
Letting $K\to\infty$ and applying monotone convergence yields
\[
\mathbb E_p[e^{\beta S(Z)}\mid Z\in F]\ \ge\ \mathbb E_p[e^{\beta S(Z)}\mid Z\notin F],
\]
i.e., $M_F(\beta)\ge M_{\bar F}(\beta)$ for all $\beta>0$. For $\beta=0$ the two are equal to $1$.

Substituting $M_F(\beta)\ge M_{\bar F}(\beta)$ into \eqref{eq:pbetaf} and comparing with $p(F)$ gives
\[
\int_F p_\beta(z)\,dz \ \ge\ p(F)=\int_F p(z)\,dz.
\]
Indeed, the mapping $a\mapsto \frac{p(F)a}{p(F)a+(1-p(F))b}$ is nondecreasing in $a$ for fixed $b>0$.
Assume $\beta>0$ and $p(F)\in(0,1)$. From \eqref{eq:pbetaf},
\[
\int_F p_\beta(z)\,dz > p(F)
\quad\Longleftrightarrow\quad
M_F(\beta) > M_{\bar F}(\beta).
\]
Under FOSD, equality of expectations for all bounded increasing $\phi$
can occur only if the two conditional distributions of $S(Z)$ (given $F$ and given $\bar F$) are identical.
In that case, for $\beta>0$ we have $M_F(\beta)=M_{\bar F}(\beta)$ and no strict amplification occurs.

Conversely, if the conditional distributions differ (i.e., $S(Z)$ is not $p$-a.s.\ equal on $F\cup\bar F$),
then by the defining property of first-order dominance there exists a bounded increasing $\phi$ such that
\[
\mathbb E_p[\phi(S(Z))\mid Z\in F] > \mathbb E_p[\phi(S(Z))\mid Z\notin F],
\]
which in particular implies strict inequality for the increasing function $s\mapsto e^{\beta s}$ (via the truncation
argument above), hence $M_F(\beta)>M_{\bar F}(\beta)$ and therefore $\int_F p_\beta > \int_F p$.

This proves the stated strictness condition.
\end{proof}

Theorem~\ref{thm::faiAmplv2} formalizes when diffusion-guided tilting helps. Tilting improves failure probability if and only if failures tend to receive higher scores than non-failures. The FOSD assumption is purely ranking-based: it does not require perfect separation or calibration of $S$ (as Proposition~\ref{prop::failAmpl} does). Algorithmically, this means that if the surrogate score $S$ orders failing scenarios ahead of non-failing ones in distribution, exponential tilting is guaranteed to increase the rate at which failures are sampled.

\begin{corollary}[Asymptotic concentration under score separation]\label{cor::concsep}
    In addition to the assumptions underlying Theorem~\ref{thm::faiAmplv2}, suppose additionally that:
    \begin{enumerate}
        \item[(i)] The set $\mathcal{B}$ is bounded from above, i.e., $\mathcal{B}\subset[0,\infty)$;
        \item[(ii)] There exists a strict separation margin, i.e., $\inf_{z\in F}S\left(z\right)>\inf_{z\notin F}S\left(z\right)$.
    \end{enumerate}
    Then
    \begin{equation*}
        \lim_{\beta\rightarrow\infty}\int_Fp_{\beta}\left(z\right)dz=1.
    \end{equation*}
    Moreover, letting $m:=\inf_{z\in F}S\left(z\right)-\inf_{z\notin F}S\left(z\right)>0$:
    \begin{equation*}
        \int_Fp_{\beta}\left(z\right)dz\ge\frac{1}{1+\frac{1-\int_F p}{\int_F p}e^{-\beta m}}.
    \end{equation*}
\end{corollary}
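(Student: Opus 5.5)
The plan is to reduce the corollary to the same ratio computation used in Proposition~\ref{prop::failAmpl}(C). By Proposition~\ref{prop::failAmpl}(A) and the decomposition \eqref{eq:pbetaf} from the proof of Theorem~\ref{thm::faiAmplv2}, writing $p(F):=\int_F p$, we have
\[
\int_F p_\beta(z)\,dz=\frac{1}{1+\frac{1-p(F)}{p(F)}\,\frac{M_{\bar F}(\beta)}{M_F(\beta)}}.
\]
So both claims (the lower bound and the limit) reduce to showing $M_{\bar F}(\beta)/M_F(\beta)\le e^{-\beta m}$ for all admissible $\beta\ge 0$. I would first dispose of the trivial cases $p(F)\in\{0,1\}$, exactly as in Theorem~\ref{thm::faiAmplv2}, and then assume $p(F)\in(0,1)$.

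\textbf{Hypothesis (i).} I read (i) as restricting attention to nonnegative tilts, so that $e^{\beta s}$ is nondecreasing in $s$ on the relevant range. Together with the definition of $\mathcal B$ and Lemma~\ref{lem::admisstilt}, this makes $M_F(\beta)$ and $M_{\bar F}(\beta)$ finite. Writing $a:=\inf_{z\in F}S(z)$, monotonicity of $s\mapsto e^{\beta s}$ gives $M_F(\beta)\ge e^{\beta a}$ immediately. The FOSD hypothesis inherited from Theorem~\ref{thm::faiAmplv2} already yields $M_F(\beta)\ge M_{\bar F}(\beta)$. The extra margin (ii) is what should upgrade this to the quantitative ratio $e^{-\beta m}$.

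\textbf{Upper bound on $M_{\bar F}(\beta)$ (main obstacle).} I need $M_{\bar F}(\beta)\le e^{\beta(a-m)}=e^{\beta\,\inf_{z\notin F}S(z)}$. As worded, (ii) only controls the infimum of $S$ on $\bar F$, while $M_{\bar F}$ is governed by its upper tail. The argument would therefore go through only if $S$ is $p$-a.s.\ concentrated at $\inf_{\bar F}S$ on $\bar F$; otherwise the conclusion need not follow from (i)--(ii) as stated. My first attempt would be to combine FOSD with (ii): FOSD says $S\mid\bar F$ is stochastically below $S\mid F$, whose support lies in $[a,\infty)$. I would try to use this to trap the essential range of $S\mid\bar F$ below $\inf_{\bar F}S$, which by definition of the infimum forces concentration at that point and gives $M_{\bar F}(\beta)\le e^{\beta\inf_{\bar F}S}$. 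If that attempt fails, this is the step where the bound genuinely requires replacing $\inf_{z\notin F}S$ by $\operatorname{ess\,sup}_{z\notin F}S$. The remainder of the argument would then be identical to Proposition~\ref{prop::failAmpl}(C).

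\textbf{Conclusion.} Granting $M_{\bar F}(\beta)/M_F(\beta)\le e^{-\beta m}$, substituting into the displayed identity gives the stated lower bound. Since $m>0$ and $p(F)>0$, the factor $\frac{1-p(F)}{p(F)}e^{-\beta m}\to 0$ as $\beta\to\infty$, so the lower bound tends to $1$. Together with $\int_F p_\beta\le 1$, this yields the limit by squeezing, with $\beta\to\infty$ taken along $\mathcal B$ as permitted by (i).
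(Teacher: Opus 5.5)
Your reduction to the ratio $M_{\bar F}(\beta)/M_F(\beta)$ is the right mechanism; it is exactly Proposition~\ref{prop::failAmpl}(C). You also located the obstacle correctly, but it is fatal rather than technical, and your ``first attempt'' cannot work. FOSD only gives $\mathbb P(S>t\mid Z\notin F)\le\mathbb P(S>t\mid Z\in F)$ for every $t$. It does not confine $S\mid\bar F$ below $\inf_F S$, let alone to the single point $\inf_{\bar F}S$.

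Concretely, take $p$ uniform on $[0,2]\times[0,1]$, $S(u,v)=u$ and $F=[1,2]\times[0,\tfrac12]$, so $p(F)=\tfrac14$. Then:
\begin{itemize}
\item $S\mid F\sim\mathrm{Unif}[1,2]$ first-order dominates $S\mid\bar F$, which puts mass $\tfrac23$ uniformly on $[0,1)$ and mass $\tfrac13$ uniformly on $[1,2]$;
\item $\inf_F S=1>0=\inf_{\bar F}S$, so $m=1$;
\item $Z(\beta)<\infty$ for all $\beta$.
\end{itemize}
A direct computation gives $\int_F p_\beta=\frac{e^{\beta}}{2(e^{\beta}+1)}\to\tfrac12$. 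The claimed bound $(1+3e^{-\beta})^{-1}=\frac{e^{\beta}}{e^{\beta}+3}$ exceeds this for every $\beta>0$ and tends to $1$. So the corollary as literally stated is false, in both its bound and its limit, and no combination of (i), (ii) and FOSD can close your step. As written, your proposal does not prove the stated corollary, and nothing can; it can only prove a corrected version.

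Your fallback is the correct repair, and it coincides with what the paper actually relies on. The paper gives no separate proof of this corollary; Corollary~\ref{cor::condcomp}(R2) simply cites it. Its only argument for this bound is Proposition~\ref{prop::failAmpl}(C), where the margin is $\inf_{z\in F}S(z)-\sup_{z\notin F}S(z)$, as also in Corollary~\ref{cor::detcomXsearch}. Under that hypothesis, or its $\operatorname{ess\,sup}$ version, $M_F(\beta)\ge e^{\beta a}$ and $M_{\bar F}(\beta)\le e^{\beta b}$ give the bound at once. The FOSD assumption then becomes redundant, since separation implies it. You should commit to proving this corrected statement rather than hedging.

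You should also fix your reading of (i). The limit needs $\mathcal B$ to be unbounded above, i.e.\ $[0,\infty)\subseteq\mathcal B$ as in Corollary~\ref{cor::condcomp}(R2), which is the reverse of what (i) says. ``Bounded from above'' forbids letting $\beta\to\infty$ along $\mathcal B$, and $\mathcal B\subset[0,\infty)$ does not provide it. Worse, the literal inclusion $\mathcal B\subset[0,\infty)$ forces $S$ to be unbounded below on $\bar F$ whenever (ii) holds, which makes $m=+\infty$. The example above satisfies (i) in the corrected sense $[0,\infty)\subseteq\mathcal B$.
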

Corollary~\ref{cor::concsep} characterizes an idealized limit. If the score perfectly separates failures from non-failures, tilting eventually samples only failures. The explicit bound shows how fast concentration occurs as a function of the margin $m$, the tilt parameter $\beta$, and the baseline failure probability $\int_F$. Conceptually, the algorithm itself imposes no barrier to failure discovery; only the quality of the score $S$ limits performance.
\begin{remark}
    The margin condition in Corollary~\ref{cor::concsep} is not required in practice and serves only to characterize the limiting behavior of exponential tilting. Theorem~\ref{thm::faiAmplv2} applies under substantially weaker ranking assumptions and explains the empirical gains observed at moderate values of $\beta$.
\end{remark}

\begin{corollary}[Strict advantage over conditional proposals]\label{cor::condcomp}
    Let $p$ be a base joint density on $\mathcal{Z} = \mathcal{X}\times\mathcal{Y}$. Let $F\subset \mathcal{Z}$ be a measurable failure set and define the conditional family of proposals:
    \begin{equation*}
        \mathcal{Q}_{\mbox{\tiny{cond}}}:=\left\lbrace q\left(x,y\right)=r\left(x\right)p\left(y\mid x\right):r\mbox{ any density on }\mathcal{X}\right\rbrace,
    \end{equation*}
    such family is composed of densities that may change the marginal over $x$ but keep the sample conditional as the base distribution. Define the conditional failure rate as:
    \begin{equation*}
        f\left(x\right):=\int_{\mathcal{Y}}\boldsymbol{1}_{F}\left(x,y\right)p\left(y\mid x\right)dy.
    \end{equation*}
    Assume there exists a measurable set $B\subset\mathcal{X}$ such that:
    \begin{enumerate}
        \item[(i)] $p\left(B\right)>0$, i.e., the base distribution places positive mass on $B$, and
        \item[(ii)] $f\left(x\right)\le 1-\eta$ for all $x\in B$, for some $\eta>0$.
    \end{enumerate}
    Then the following holds:
    \begin{enumerate}
        \item[(R1)] For every $q\in \mathcal{Q}_{\mbox{\tiny{cond}}}$ with supp$\left(r\right)\subseteq B$,
            $\int_F q\left(x,y\right)dxdy\le 1-\eta.$
        Equivalently, even the best conditional sampler supported on $B$ cannot make failure probability exceed $1-\eta$.
        \item[(R2)] Suppose there exists a measurable score $S:\mathcal{Z}\rightarrow\mathbb{R}$ such that
            $\inf_{z\in F}S\left(z\right)>\inf_{z\notin F}S\left(z\right),$
        and suppose the exponential tilt is admissible for all $\beta\ge 0$ (i.e., $Z\left(\beta\right)=\int p\left(z\right)e^{\beta S\left(z\right)}dz<\infty$ for all $\beta\ge 0$, the tilted distribution $p_{\beta}\left(z\right)=\frac{p\left(z\right)e^{\beta S\left(z\right)}}{Z\left(\beta\right)}$ satisfies:
            $\lim_{\beta\rightarrow\infty}\int_Fp_{\beta}\left(z\right)dz=1.$
    \end{enumerate}
\end{corollary}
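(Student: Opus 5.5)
The plan is to prove the two parts separately. (R1) is a direct Fubini computation on the conditional family. For (R2) I would reduce to the concentration bound already established in Proposition~\ref{prop::failAmpl}(C) and Corollary~\ref{cor::concsep}. The two parts are then combined: tilting pushes the failure probability past the cap $1-\eta$ that binds every conditional proposal supported on $B$.

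For (R1), fix $q(x,y)=r(x)p(y\mid x)$ with $\mathrm{supp}(r)\subseteq B$. By Tonelli (the integrand is nonnegative),
\begin{align*}
\int_F q(x,y)\,dx\,dy=\int_{\mathcal X} r(x)\Big(\int_{\mathcal Y}\mathbf 1_F(x,y)p(y\mid x)\,dy\Big)dx=\int_B r(x) f(x)\,dx .
\end{align*}
Using hypothesis (ii), $f(x)\le 1-\eta$ on $B$, together with $\int_B r=1$, this is at most $1-\eta$. Hypothesis (i), $p(B)>0$, is used only to ensure the family is nonvacuous. For instance, $r=p(\cdot)\mathbf 1_B/p(B)$ is an admissible choice, so the bound is attained by a genuine member of $\mathcal Q_{\mathrm{cond}}$.

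For (R2), I would write $p_\beta(F)$ in the likelihood-ratio form of Proposition~\ref{prop::failAmpl}(A),
\[
\int_F p_\beta=\frac{p(F)M_F(\beta)}{p(F)M_F(\beta)+(1-p(F))M_{\bar F}(\beta)} .
\]
I would then bound $M_F(\beta)\ge e^{\beta a}$ and $M_{\bar F}(\beta)\le e^{\beta b}$, exactly as in part (C) of that proposition, and obtain
\[
\int_F p_\beta\ge \Big(1+\tfrac{1-p(F)}{p(F)}e^{-\beta(a-b)}\Big)^{-1}\to 1 .
\]
Admissibility for all $\beta\ge0$ guarantees that $Z(\beta)<\infty$, so every step is legitimate. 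I also need $p(F)>0$, which I would assume explicitly; otherwise no reweighting of $p$ can reach $F$. Once the limit is established, choosing $\beta$ large gives $\int_F p_\beta>1-\eta$, which strictly exceeds every value attainable in (R1). This is the advertised strict advantage.

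The main obstacle is the separation hypothesis as literally stated, $\inf_F S>\inf_{\bar F}S$. It does not control $M_{\bar F}(\beta)$ from above: if $S$ is unbounded above on $\bar F$, non-failures can dominate the tilt and the limit fails. The bound above needs $a=\inf_F S$ to exceed $b=\sup_{\bar F}S$ (or, more generally, the $p$-essential supremum of $S$ on $\bar F$). This is the margin form $\inf_F S\ge\sup_{\bar F}S+m$ used in Proposition~\ref{prop::failAmpl}(C). I would therefore read (R2), and Corollary~\ref{cor::concsep} on which it rests, with this margin, and state this reading explicitly. As a fallback, the argument survives if one only requires an essential-supremum gap. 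Then a dominated-convergence argument on $M_{\bar F}(\beta)e^{-\beta a}\to0$ yields the same limit without an explicit rate.
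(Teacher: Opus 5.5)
Your proposal follows the paper's proof: (R1) is the same disintegration $\int_F q=\int_B r f\le 1-\eta$, and for (R2) the paper simply invokes Corollary~\ref{cor::concsep}, which you carry out via the bound of Proposition~\ref{prop::failAmpl}(C). Your caveat is correct and necessary, and it bites even for bounded scores: the literal hypothesis $\inf_F S>\inf_{\bar F}S$ (the same condition used in Corollary~\ref{cor::concsep}) does not force concentration. For instance, take $p$ uniform on $[0,3]^2$, $F=[1,2]\times[0,3]$ and $S(x,y)=x$; then $\int_F p_\beta\to 0$. So (R2) must be read with $\inf_F S>\sup_{\bar F}S$ (an essential supremum suffices) together with $p(F)>0$, exactly as you propose.
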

R1 is the core limitation: if failure is not guaranteed given 
$x$ (i.e., $f\left(x\right)<1$), then any conditional proposal that keeps $p\left(y\mid x\right)$ fixed cannot force failure probability to $1$. Instead, it is capped by the within-$x$ failure rate. R2 shows the contrasting capability: a joint proposal (realized by tilting / guided diffusion) can re-weight within the fiber over each $x$ and concentrate on failing $\left(x,y\right)$ pairs.

\begin{proof}
We prove each result individually.

\noindent(R1) For $q\left(x,y\right) = r\left(x\right)p\left(y\mid x\right)$ (definition of reward and conditional density):
    \begin{align*}
        \int_F q &= \int_{\mathcal{X}}r\left(x\right)\left(\int\boldsymbol{1}_{F}\left(x,y\right)p\left(y\mid x\right)dy\right)dx\\
        &=\int r\left(x\right)f\left(x\right)dx.
    \end{align*}
(R2) This is exactly Corollary~\ref{cor::concsep} from before (score separation implies concentration) applied to the joint space.
\end{proof}

Together, these results show that diffusion-guided tilting overcomes the multiplicative rarity barrier faced by conditional sampling. In the next section, we discuss how this viewpoint motivates the design of our guided sampling procedure.

\begin{remark}[On model mismatch]
While our results establish failure amplification under the learned model $p_\theta$, mismatch with the true distribution $p^\star$ remains. Exponential tilting reweights samples by $e^{\beta S(z)}$, so discrepancies in high-score regions are amplified as $\beta$ increases. This induces a trade-off: larger $\beta$ improves failure discovery but may also magnify model error. A precise characterization of this effect is left for future work.
\end{remark}

\section{Implications for Falsification and Verification}\label{sec::model-guidance}

The analysis developed in the previous section establishes diffusion-guided sampling as a form of distributional importance sampling with provable failure amplification properties. In this section, 
we interpret exponential tilting as a principled mechanism for counterexample generation, and clarify how it overcomes fundamental limitations of conditional sampling strategies commonly used in simulation-based testing.

Our goal is to ground the proposed framework in established verification practice, showing how diffusion-guided tilting complements existing falsification pipelines by providing distribution-aware guidance that remains compatible with black-box simulators, runtime monitors, and deterministic system dynamics. This perspective positions diffusion-guided falsification not as a replacement for formal verification techniques, but as a theoretically justified enhancement to simulation-based counterexample search in complex, high-dimensional systems.

To connect this formulation to practical falsification pipelines, it is important to clarify the role of the deterministic simulator.
\tk{In many falsification pipelines, the execution trace $y$ is not generated by a learned model of the dynamics (such as a diffusion model), but rather by a simulator. In this setting, the method generates a scenario specification $x \in \mathcal{X}$, with $x \sim q(x)$, and the corresponding system execution is given by $y = \Phi(x)$, where $\Phi$ denotes the (possibly deterministic) system dynamics. In this case, the failure set can be equivalently defined in the input space as}
\[
    F = \left\lbrace x\in\mathcal{X}:R\left(x,\Phi\left(x\right)\right)\le 0\right\rbrace.
\]
Thus, failure becomes an event in $\mathcal{X}$, and the rarity is now:
\[
    p\left(F\right) = \mathbb{P}_{X\sim q}\left(X\in F\right),
\]
with an associated sample complexity $\Theta\left(1/p\left(F\right)\right)$ regardless of whether the simulator is deterministic or stochastic. Given these considerations, we can interpret the role of joint modeling through the following remark.

\begin{remark}[Deterministic Dynamics] \label{rem:detdyn}
    If $y=\Phi\left(x\right)$, then the true data lives on a manifold:
\[
\mathcal{M} = \left\lbrace\left(x,y\right):y=\Phi\left(x\right)\right\rbrace\subset \mathcal{X}\times \mathcal{Y},
\]
and a joint diffusion is learning a density concentrated near $\mathcal{M}$. Tilting the joint density by a score $S\left(x,y\right)$ is then equivalent to tilting along the manifold, namely:
\[
    p_k\left(x\right)\propto p_{\theta}\left(x\right)\exp\left(\beta S\left(x,\Phi\left(x\right)\right)\right).
\]
Equivalently, joint tilting reduces to optimal importance sampling over $x$.
\end{remark}

We now adapt the key result from Theorem~\ref{thm::faiAmplv2} to the case where the physical system is deterministic (unique rollout), while the diffusion model may still be stochastic as a generator. We start by assuming that there exist\rtk{s} a measurable function (simulator/dynamics)
    $\Phi:\mathcal{X}\rightarrow\mathcal{Y},\; y=\Phi\left(x\right),$
and define the failure set in input space as
\begin{align*}
   F_{X}:=\left\lbrace x\in \mathcal{X}:\left(x,\Phi\left(x\right)\right)\in F\right\rbrace=\left\lbrace x:R\left(x,\Phi\left(x\right)\right)\le 0\right\rbrace. 
\end{align*}

\begin{proposition}[Joint tilting reduces to input-space tilting on the dynamics manifold]\label{prop::dynjoint}
    Assume the base joint distribution is supported on (or concentrates on) the dynamics manifold $\mathcal{M}:=\left\lbrace\left(x,y\right):y=\Phi\left(x\right)\right\rbrace$. Then the joint density can be rewritten as:
    \begin{equation*}
        p\left(x,y\right)=p_{X}\delta\left(y-\Phi\left(x\right)\right),
    \end{equation*}
    where $p_{X}$ is the induced marginal over inputs. 
    
    Define the joint tilt with score $S\left(x,y\right)$, and the tilted density
    \begin{equation}\label{eq:tilted_density}
        p_{\beta}\left(x,y\right)\propto p\left(x,y\right)e^{\beta S\left(x,y\right)}.
    \end{equation}
    Then, the induced marginal over inputs under the tilted joint satisfies:
    \begin{equation*}
        p_{\beta,X}\left(x\right):=\int_{\mathcal{Y}}p_{\beta}\left(x,y\right)dy\propto p_X\left(x\right)\exp\left(\beta S\left(x,\Phi\left(x\right)\right)\right).
    \end{equation*}
\end{proposition}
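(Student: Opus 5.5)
The plan is to treat the degenerate joint density $p(x,y)=p_X(x)\,\delta(y-\Phi(x))$ as a measure rather than a pointwise density, and to compute the $x$-marginal of the tilted measure by integrating out $y$ against the Dirac mass. First, I will make the representation precise. Since the base joint law is supported on $\mathcal{M}$, it is the pushforward of $p_X$ under the graph map $G(x):=(x,\Phi(x))$. Measurability of $\Phi$ makes $G$ measurable. Then for every nonnegative measurable $h$ on $\mathcal{X}\times\mathcal{Y}$ we have
\begin{equation*}
\int h(x,y)\,p(x,y)\,dx\,dy=\int_{\mathcal{X}} h\bigl(x,\Phi(x)\bigr)\,p_X(x)\,dx .
\end{equation*}
This is the rigorous content of the Dirac notation, and it is the only property of $\delta(\cdot)$ I will use.

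Second, I will handle the normalizing constant. Applying this identity with $h=e^{\beta S}$ gives
\begin{equation*}
Z(\beta)=\int_{\mathcal{X}} p_X(x)\,e^{\beta S(x,\Phi(x))}\,dx .
\end{equation*}
So admissibility $\beta\in\mathcal{B}$ in the sense of Lemma~\ref{lem::admisstilt} is equivalent to finiteness of the input-space normalizer. The tilted joint is then a well-defined probability measure, namely $p_\beta = e^{\beta S}p/Z(\beta)$.

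Third, I will compute the marginal by testing against a measurable rectangle $A\times\mathcal{Y}$, which means taking $h(x,y)=\mathbf{1}_A(x)\,e^{\beta S(x,y)}/Z(\beta)$. This yields
\begin{equation*}
p_{\beta}(A\times\mathcal{Y})=\frac{1}{Z(\beta)}\int_A p_X(x)\,e^{\beta S(x,\Phi(x))}\,dx .
\end{equation*}
Since this holds for every measurable $A\subset\mathcal{X}$, the $x$-marginal has density $p_{\beta,X}(x)=p_X(x)e^{\beta S(x,\Phi(x))}/Z(\beta)$, which is the claimed proportionality. As a by-product, the tilted joint is again concentrated on $\mathcal{M}$, because $p_\beta\ll p$ and $p(\mathcal{M}^c)=0$. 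Hence $p_\beta(x,y)=p_{\beta,X}(x)\,\delta(y-\Phi(x))$, so tilting only reweights inputs along the manifold. This matches Remark~\ref{rem:detdyn}, and combined with Proposition~\ref{prop::optTilt} applied on $\mathcal{X}$ with score $x\mapsto S(x,\Phi(x))$ it gives the optimal-IS interpretation.

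The main obstacle is the phrase ``supported on (or concentrates on)''. In the exact-support case the argument above is complete. In the merely concentrating case I would model the prior as a family $p^{(n)}(x,y)=p_X(x)\,k_n(y-\Phi(x))$ with mollifiers $k_n\to\delta$. I would then pass to the limit in
\begin{equation*}
\int p_X(x)\,e^{\beta S(x,y)}\,k_n\bigl(y-\Phi(x)\bigr)\,dy .
\end{equation*}
Under continuity of $S$ in $y$ and a domination condition (for example $e^{\beta S}$ locally bounded near $\mathcal{M}$ with integrable envelope), dominated convergence makes this converge to $p_X(x)\,e^{\beta S(x,\Phi(x))}$. The tilted marginals then converge weakly to the stated limit. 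I would present the exact-support version as the proof and state the concentration version as an approximation under these regularity assumptions.
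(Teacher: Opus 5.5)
Your proof is correct and takes essentially the same route as the paper. The paper writes the base law as the disintegration $p(dx,dy)=p_X(dx)\,\delta_{\Phi(x)}(dy)$ and integrates out $y$ against the Dirac mass to obtain $p_{\beta,X}(dx)=Z(\beta)^{-1}p_X(dx)\,e^{\beta S(x,\Phi(x))}$; your graph-map pushforward identity and the test on rectangles $A\times\mathcal{Y}$ are a more careful rendering of that same computation, while your input-space formula for $Z(\beta)$ and the mollifier treatment of the ``concentrates on'' case (under your added continuity and domination assumptions) go beyond the exact-support case that the paper's proof handles.
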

\begin{proof}
    See Appendix~\ref{app:prop_4_proof}.
\end{proof}
\tk{In the deterministic case, the ``joint'' method reduces to an importance sampler over inputs where the weights are evaluated along the realized trajectory $y=\Phi\left(x\right)$. In other words, the proposed mechanism composed of the joint diffusion and tilting provides a principled way to learn a proposal over $x$ that respects feasibility ($y=\Phi\left(x\right)$) automatically.}

\begin{corollary}[Deterministic comparison: diffusion-guided tilting vs input-only search]\label{cor::detcomXsearch}
    Let $F_X=\left\lbrace x:R\left(x,\Phi\left(x\right)\right)\le 0\right\rbrace$ and suppose there exists an input score $T\left(x\right):=S\left(x,\Phi\left(x\right)\right)$ that separates failures and non-failures in input space:
    \begin{equation*}
        \inf_{x\in F_X} T\left(x\right)>\sup_{x\notin F_X} T\left(x\right), 
    \end{equation*}
    and the input space partition function is finite for all $\beta \ge 0$:
    \begin{equation*}
        Z_X\left(\beta\right):=\int p_X\left(x\right)e^{\beta T\left(x\right)}dx <\infty.
    \end{equation*}
    Define the tilted input proposal
        $p_{\beta,X}\left(x\right)=\frac{p_X\left(x\right)e^{\beta T\left(x\right)}}{Z_X\left(\beta\right)}.$
    Then the tilted input distribution satisfies:
        $\lim_{\beta\rightarrow\infty}\int_{F_X} p_{\beta,X}\left(x\right)dx =1$.
\end{corollary}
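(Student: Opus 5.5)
The plan is to treat the statement as Proposition~\ref{prop::failAmpl}(C) transplanted to the input space $\mathcal{X}$, with base density $p_X$, failure set $F_X$ and score $T$. By Proposition~\ref{prop::dynjoint}, the marginal over inputs of the tilted joint is exactly $p_{\beta,X}\propto p_X e^{\beta T}$. So the claim is about a one-space exponential tilt, and the joint structure plays no further role. I would first dispose of degenerate cases. If $p_X(F_X)=1$, the tilted mass on $F_X$ equals $1$ for every $\beta$. If $p_X(F_X)=0$, the conclusion cannot hold, so I would state explicitly, as Proposition~\ref{prop::failAmpl}(B) implicitly does, that $p_X(F_X)\in(0,1)$ is assumed (rare but possible failure). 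Finiteness of $Z_X(\beta)$ for all $\beta\ge 0$ guarantees, via Lemma~\ref{lem::admisstilt}, that $p_{\beta,X}$ is a valid density.

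Set $a:=\inf_{x\in F_X}T(x)$, $b:=\sup_{x\notin F_X}T(x)$ and $m:=a-b>0$. Split $Z_X(\beta)$ into its integrals over $F_X$ and $\bar F_X$. On $F_X$ we have $e^{\beta T}\ge e^{\beta a}$, so the first integral is at least $p_X(F_X)e^{\beta a}$. On the complement $e^{\beta T}\le e^{\beta b}$, so the second integral is at most $(1-p_X(F_X))e^{\beta b}$. The function $u\mapsto u/(u+v)$ is increasing in $u$ and decreasing in $v$. Applying it to the exact ratio from Proposition~\ref{prop::failAmpl}(A) gives
\[
\int_{F_X}p_{\beta,X}(x)\,dx\;\ge\;\frac{1}{1+\frac{1-p_X(F_X)}{p_X(F_X)}e^{-\beta m}}.
\]
Since $m>0$ and the prefactor is a finite constant, the right-hand side tends to $1$ as $\beta\to\infty$. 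The tilted mass is also at most $1$, so the limit follows by squeezing.

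The only delicate point is when $a$ or $b$ is infinite. If $b=-\infty$, the complement carries no tilted mass for $\beta>0$, or has $p_X$-measure zero, and the limit is immediate. If $a=+\infty$, then $F_X$ would have to be empty, contradicting $p_X(F_X)>0$. With these cases noted, the argument is exactly Proposition~\ref{prop::failAmpl}(C) with $(p,F,S)$ replaced by $(p_X,F_X,T)$, and I would present it as a direct invocation of that result on $\mathcal{X}$, rather than through Corollary~\ref{cor::concsep}.
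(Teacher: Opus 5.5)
Your proof is correct and follows the route the paper intends. The paper gives no standalone proof; it presents the corollary as the input-space analogue of joint-space concentration, i.e., the separation bound of Proposition~\ref{prop::failAmpl}(C) applied to $(p_X,F_X,T)$, which is exactly your argument. Two of your additions are improvements: the explicit assumption $p_X(F_X)\in(0,1)$ is genuinely needed, since if $p_X(F_X)=0$ the tilted mass on $F_X$ is zero for every $\beta$; and invoking Proposition~\ref{prop::failAmpl}(C) rather than Corollary~\ref{cor::concsep} is the right choice, because the latter's margin is written with $\inf_{z\notin F}S$ where $\sup_{z\notin F}S$ is required.
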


Hence, even when the underlying system is deterministic, tilting yields an optimal ``distributional'' way to push probability mass onto failing inputs (subject to staying close to $p_X$, i.e., realism). This is the deterministic analogue of failure concentration in joint space.

\rtk{\linelabel{ln:joint-advantages-start}Proposition~\ref{prop::dynjoint} and Corollary~\ref{cor::detcomXsearch} show that under deterministic dynamics, joint and input-space tilting target the same distribution. They still differ in how the prior is learned, how the search is guided, and how validity is ensured, and each favors the joint model. First, fitting the joint $p_\theta(x,y)$ to inputs and traces together lets the traces constrain the input marginal $p_\theta(x)$, reducing its epistemic uncertainty. Second, the search is guided without involving the simulator. For an input-space score, the gradient with respect to the input must be propagated through $\Phi$ and is unavailable when $\Phi$ is a black box, whereas the joint model exposes it directly in joint space from its own predicted trace. This joint-space gradient is approximate, but it never differentiates through $\Phi$.\linelabel{ln:joint-advantages-end} \linelabel{ln:phys-valid-start}Third, validity holds regardless of model accuracy. Only the candidate input $x$ is sent to the simulator, which returns the true execution $\Phi(x)$, so every counterexample is valid by construction even if the predicted trace $\hat y$ is wrong. Their gap measures the model's error on that sample and can refine $p_\theta$ online (Section~\ref{sec::conclusion})\linelabel{ln:phys-valid-end}}.


\noindent\emph{Illustrative Example.}
To illustrate the deterministic interpretation, consider a low-dimensional setting with a nonlinear map $y=\Phi(x)$ for $x,y\in\mathbb{R}^2$. We define the robustness as the Himmelblau's function
and define the score as $S(x,y) = -R(y)$, inducing the input-space score $S(\Phi(x))$.

Fig.~\ref{fig:running_example} shows the map $x \mapsto y=\Phi(x)$ together with the induced robustness landscape $R(\Phi(x))$ over the input space, which defines the failure-relevant regions. Fig.~\ref{fig:tilting_det} illustrates the effect of exponential tilting for increasing $\beta$. As $\beta$ increases, probability mass is progressively reallocated toward high-score (failure-prone) regions, leading to concentration near failures, consistent with Corollary~\ref{cor::detcomXsearch}.

\begin{figure}[b]
\centering

\begin{subfigure}[t]{0.3\linewidth}
  \includegraphics[width=\linewidth]{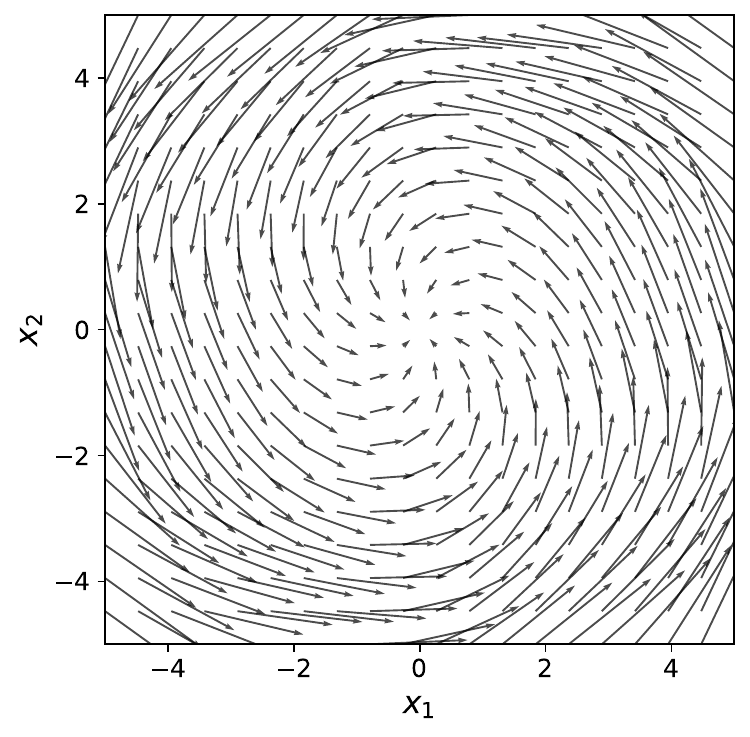}
  \caption{Nonlinear map $x\mapsto y=\Phi(x)$.}
\end{subfigure}
\begin{subfigure}[t]{0.3\linewidth}
  \includegraphics[width=\linewidth]{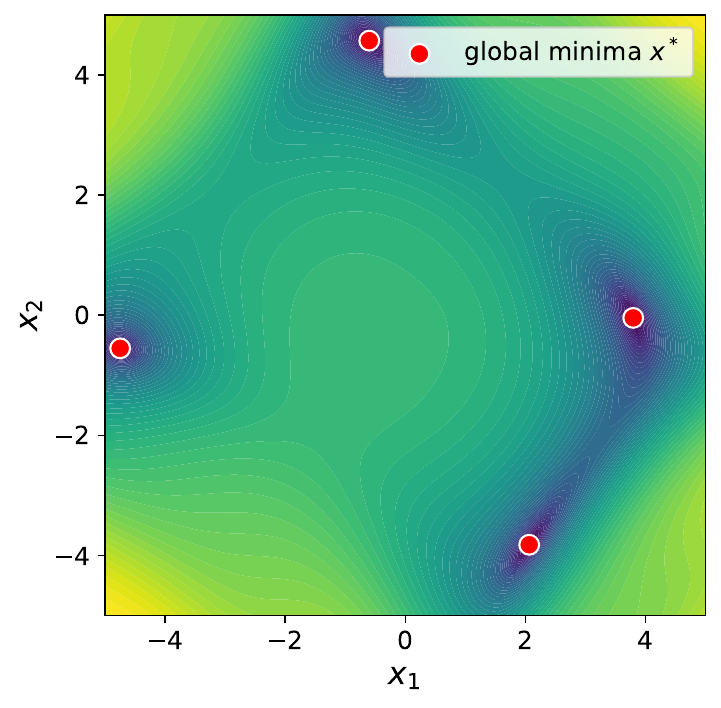}
  \caption{Induced robustness $R(\Phi(x))$ in input space.}
\end{subfigure}

\caption{
Deterministic embedding and induced score landscape. 
Left: nonlinear map $x \mapsto y=\Phi(x)$. 
Right: robustness evaluated on outputs, visualized over the input space as $R(\Phi(x))$. 
}
\label{fig:running_example}
\end{figure}

\begin{figure}[]
\centering

\begin{subfigure}[t]{0.3\linewidth}
  \includegraphics[width=\linewidth]{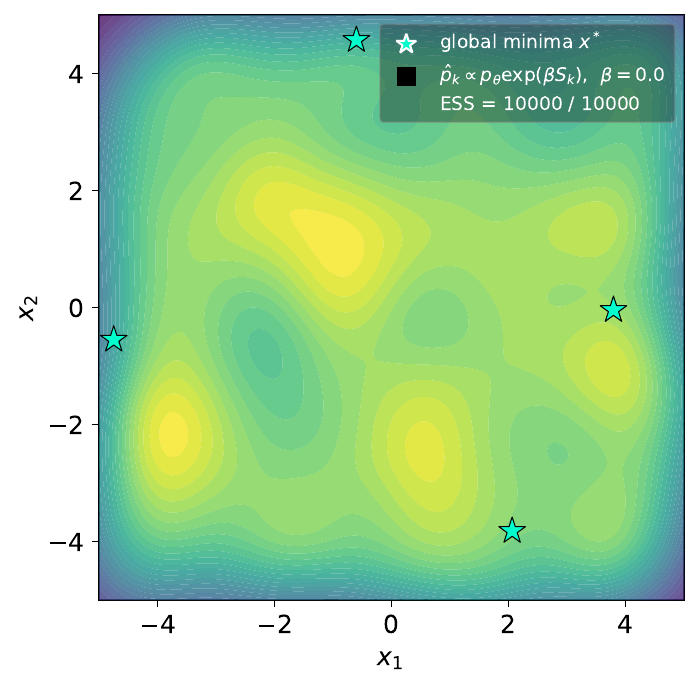}
  \caption{$\beta = 0$}
\end{subfigure}\hfill
\begin{subfigure}[t]{0.3\linewidth}
  \includegraphics[width=\linewidth]{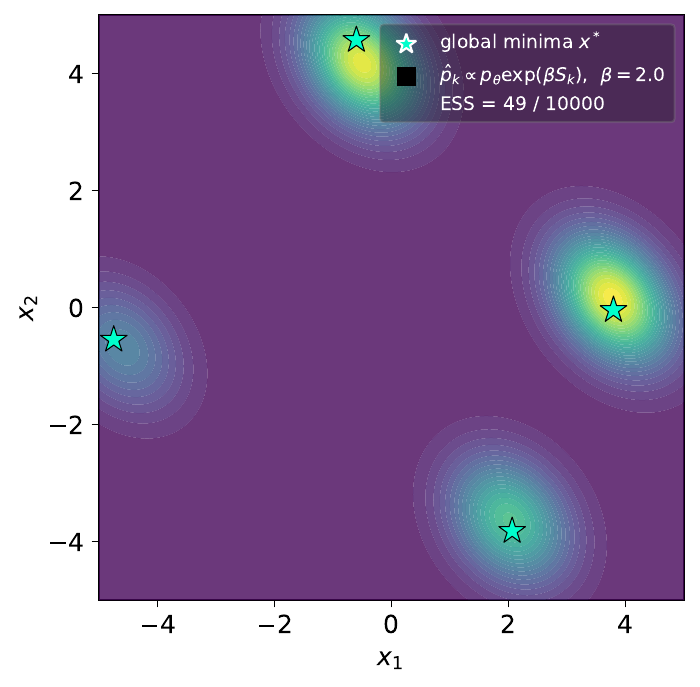}
  \caption{$\beta = 2$}
\end{subfigure}\hfill
\begin{subfigure}[t]{0.3\linewidth}
  \includegraphics[width=\linewidth]{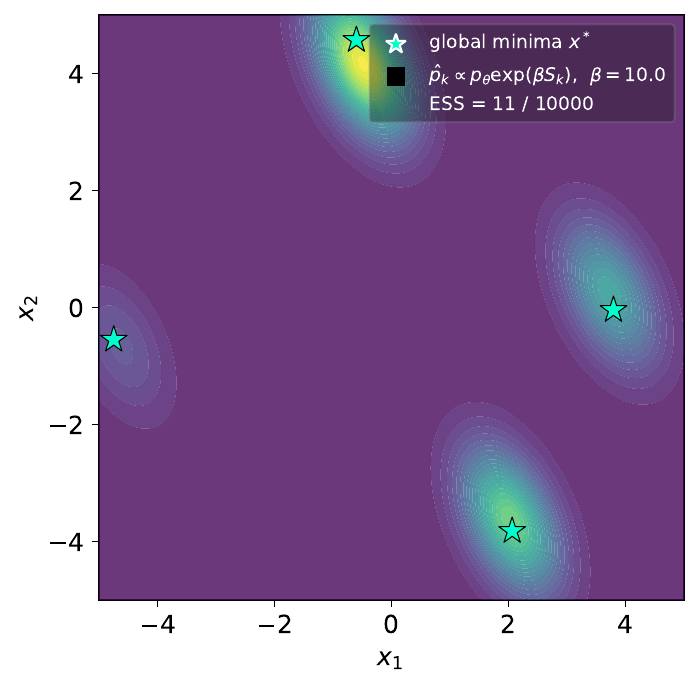}
  \caption{$\beta = 10$}
\end{subfigure}

\vspace{0.6em}

\begin{subfigure}[t]{0.3\linewidth}
  \includegraphics[width=\linewidth]{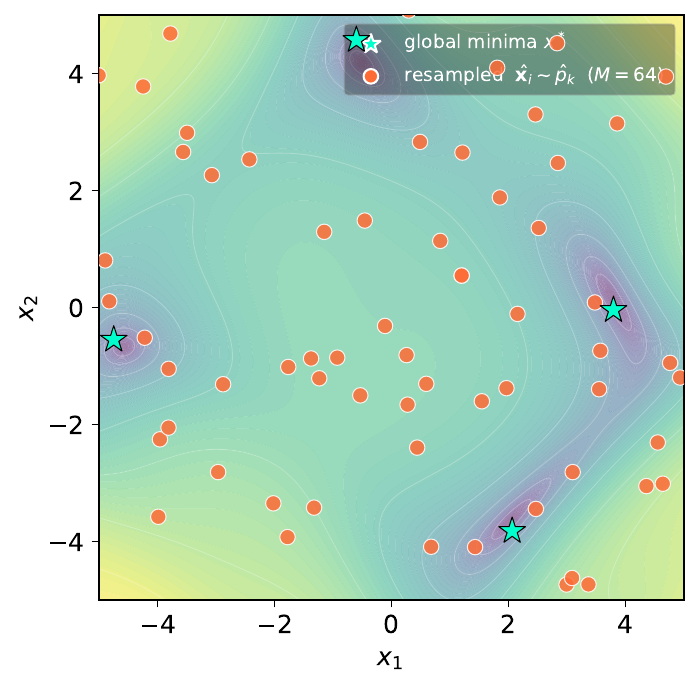}
  \caption{$\beta = 0$}
\end{subfigure}\hfill
\begin{subfigure}[t]{0.3\linewidth}
  \includegraphics[width=\linewidth]{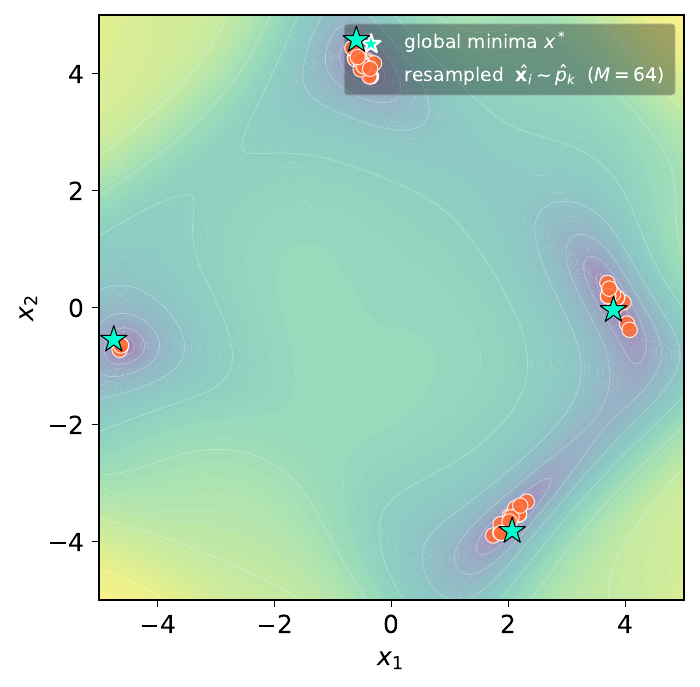}
  \caption{$\beta = 2$}
\end{subfigure}\hfill
\begin{subfigure}[t]{0.3\linewidth}
  \includegraphics[width=\linewidth]{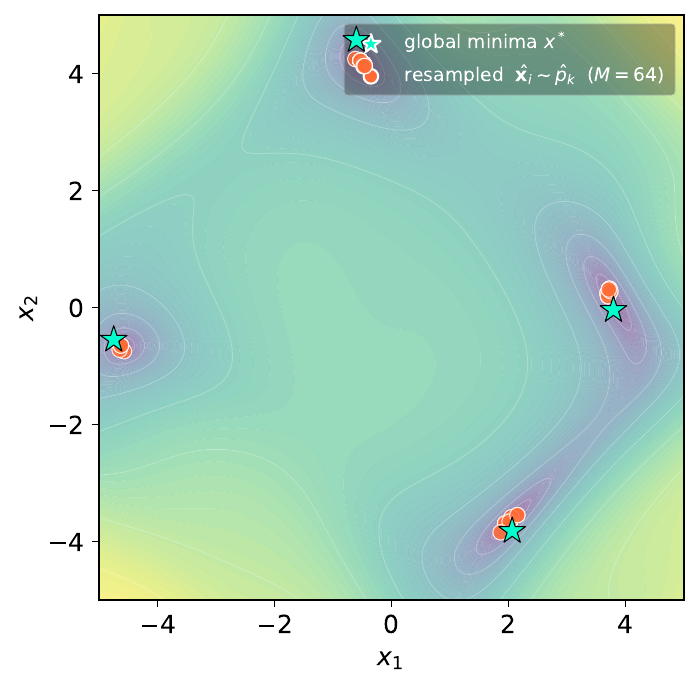}
  \caption{$\beta = 10$}
\end{subfigure}

\caption{
Effect of exponential tilting on the input distribution. 
Top row: reweighted densities proportional to $p_\theta(x)\exp(\beta S(x))$. 
Bottom row: samples obtained by importance sampling from the weighted distribution. 
}
\label{fig:tilting_det}
\end{figure}

\section{Algorithmic Details }\label{sec::implementation}

\linelabel{ln:arch-overview-start}\rtk{Fig.~\ref{fig:architecture} provides an 
overview of the \fw pipeline. \fw operates in two phases. In the offline phase, 
a diffusion model learns the joint prior $p_\theta(x,y)$ over plausible input 
trajectory pairs $(x,y)$, which is then frozen and reused across objectives without retraining. In the 
online phase, at each iteration $k$, candidates are drawn from the tilted density 
$p_k(z) \propto p_\theta(x,y)\exp(\beta_k S_k(x,y))$ via gradient-guided reverse 
diffusion, where $\nabla S_k$ biases each denoising step toward higher-score 
regions of the joint space. Only the selected candidate is then executed through 
the black-box simulator to evaluate the true $R$, with the observation used to 
refine $S_k$ if falsification is not achieved.}
\linelabel{ln:arch-overview-end}
We next describe each stage of the pipeline and then state the complete procedure in Algorithm~\ref{alg:diffusion_falsification}.

\begin{figure*}[]
    \centering
    \includegraphics[width=0.8\textwidth]{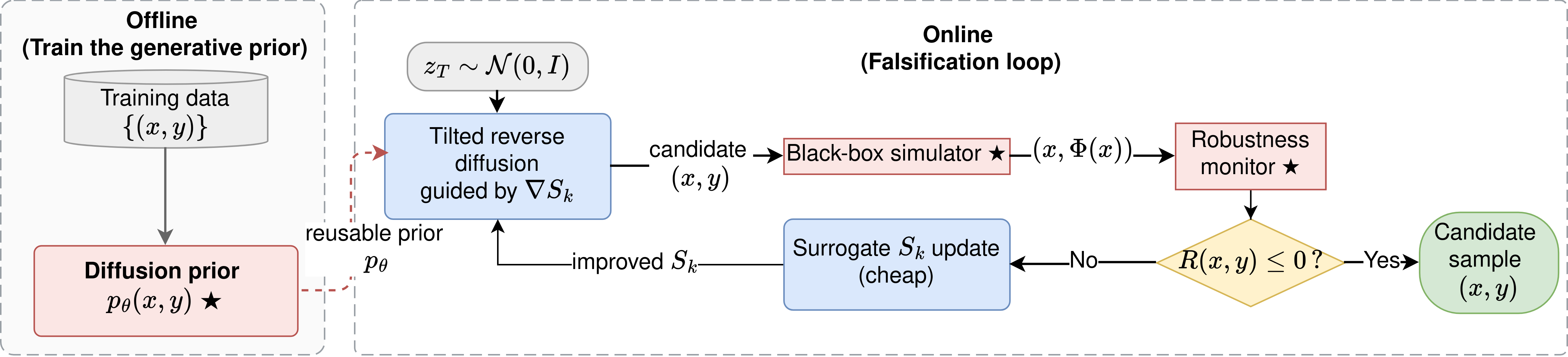}
    \caption{\fw system architecture. \textit{Offline}, a joint diffusion prior $p_\theta(x,y)$ is trained once on randomly sampled $(x,y)$ pairs and frozen. \textit{Online}, tilted reverse diffusion guided by $\nabla S_k$ proposes candidates; the black-box simulator returns $R(x,y)$, which tests the failure condition $R(x,y)\!\le\!0$ and refines the surrogate $S_k$. \textcolor{costred}{Red}~(\,$\star$\,) steps invoke the true system or train the prior (expensive)
    ; \textcolor{costblue}{blue} steps run entirely within the learned model (cheap); grey denotes data; green denotes the falsifying output. 
    }
    \label{fig:architecture}
\end{figure*}

\rtk{\paragraph{Joint Diffusion}
The\linelabel{ln:alg-joint_diff-start} joint prior $p_\theta(z)$ is trained using DDPM~\cite{Ho2020DDPM} on paired samples $(x_0, y_0)$, with $z = (x,y)$, using a coupled architecture adapted from JointNet~\cite{zhang2023jointnet}. Two interacting branches, one for $x$ and one for $y$, each predict their corresponding noise component while conditioning on the other, capturing dependencies between inputs and trajectories. Given paired samples $(x_0, y_0)$, we train with a joint denoising objective
\[
\mathbb{E}\!\left[
\|\epsilon_x - \hat{\epsilon}_x(x_t,y_t,t)\|_2^2
+\|\epsilon_y - \hat{\epsilon}_y(y_t,x_t,t)\|_2^2
\right],
\]
where $(x_t,y_t)$ are obtained via the forward diffusion process. To improve learning stability, the input branch is first pretrained, followed by joint training with cross-conditioning, and finally end-to-end fine-tuning of the full model.
\linelabel{ln:alg-joint_diff-end}
\linelabel{ln:alg-base_sampling-start}
\paragraph{Base Sampling}
Sampling from the learned joint prior $p_\theta(x,y)$ follows the standard DDPM reverse process~\cite{Ho2020DDPM}. Starting from $z_T\sim\mathcal{N}(0,I)$, samples are generated via
\begin{align}\label{eq:ddpm_prev_time_clean}
    z_{t-1} = \mu_\theta(z_t,t) + \sigma_t\eta,
    \quad \eta\sim\mathcal{N}(0,I),
\end{align}
where $\mu_\theta(z_t,t)$ is defined in Eq.~\eqref{eq:ddpm_mean}. This defines the unguided sampler used to initialize the dataset with $B$ samples (Algorithm~\ref{alg:diffusion_falsification}, Lines~\ref{alg:line:init_start}--\ref{alg:line:init_end}).
\linelabel{ln:alg-base_sampling-end}
\linelabel{ln:alg-score-func-start}
\paragraph{Scoring Function}
The surrogate $S_k:\mathcal{X}\times\mathcal{Y}\rightarrow\mathbb{R}$ is a neural network trained on the accumulated dataset $\mathcal{H}$ to approximate $-R(x,y)$, normalized against the empirical standard deviation of robustness values in $\mathcal{H}$ to stabilize gradient magnitudes across iterations. Consistent with the joint formulation in Section~\ref{sec::main:analysis}, the surrogate takes the full pair $(x,y)$ as input, capturing dependencies between the scenario and the induced trace. 
\linelabel{ln:alg-score-func-end}
\linelabel{ln:alg-grad_guided-start}
\paragraph{Gradient-Guided Sampling}
For a differentiable surrogate, the log-gradient of the tilted target decomposes as
\begin{equation}\label{eq:tilted_score}
    \nabla_z\log p_k(z)
    =
    \underbrace{\nabla_z\log p_\theta(z)}_{\text{prior score}}
    +\;
    \beta_k\,\underbrace{\nabla_z S_k(z)}_{\text{surrogate guidance}},
\end{equation}
where the diffusion model provides implicit access to $\nabla_z\log p_\theta(\cdot)$ via Eq.~\eqref{eq:score}, and the surrogate provides $\nabla_z S_k(\cdot)$. Since $S_k$ is defined on clean samples, guidance is applied at each reverse step $t$ by first computing the Tweedie denoised estimate
\[
    \hat{z}_0(z_t,t)
    =
    \frac{1}{\sqrt{\bar{\alpha}_t}}
    \!\left(z_t - \sqrt{1-\bar{\alpha}_t}\,\varepsilon_\theta(z_t,t)\right),
\]
and modifying the reverse mean as 
$\mu^{(k)}(z_t,t)=\mu_\theta(z_t,t) +\gamma_t\beta_k\,\nabla_z S_k(\hat{z}_0(z_t,t))$,
with $\beta_k$ and $S_k$ fixed across all $t$ within iteration $k$. Expressed in the noise-prediction parameterisation:

\begin{align}\label{eq:modified_noise_ddpm_clean}
    \varepsilon^{(k)}(z_t,t)
    =
    \varepsilon_\theta(z_t,t)
    -
    \frac{\sqrt{\alpha_t}\sqrt{1-\bar{\alpha}_t}}{1-\alpha_t}
    \,\gamma_t\beta_k\,
    \nabla_z S_k\!\left(\hat{z}_0(z_t,t)\right).
\end{align}
The guided reverse step is then
\begin{equation}\label{eq:ddpm_guided_step}
    z_{t-1} = \mu^{(k)}(z_t,t) + \sigma_t\eta,
    \quad \eta\sim\mathcal{N}(0,I),
\end{equation}
keeping the diffusion variance unchanged. From $M$ independent noise samples this produces $M$ candidate scenarios.
\linelabel{ln:alg-grad_guided-end}
\linelabel{ln:alg-sel-eval-start}
\paragraph{Selection and Evaluation}
Guided diffusion produces $M$ candidate scenarios reweighted as $p_i \propto \exp(\beta_k S_{k,\theta}(x_i,y_i))$. A candidate is sampled, simulated using $\Phi(x)$, and evaluated (Algorithm~\ref{alg:diffusion_falsification}, Lines~\ref{alg:line:evaluate}--\ref{alg:line:surrogate_retrain}), with the resulting data used to update the surrogate. This process repeats until falsification or budget exhaustion\linelabel{ln:alg-sel-eval-end}}.

\SetKwFor{ForEach}{foreach}{do}{end foreach}
\begin{algorithm}[t]
\footnotesize
\caption{\alg}
\label{alg:diffusion_falsification}
\KwIn{Learned prior \(p_\theta(x,y)\), robustness \(R(x,y)\), batch size \(B\), candidates \(M\), simulation budget, diffusion steps \(T\), guidance weight \(\{\beta_k\}\)}
Initialize dataset \(\mathcal{H}\gets\emptyset\)\;
\For{iteration \(k = 1,2,\dots\)}{
  \If{\(k = 1\)}{
      \For{\(i=1,\dots,B\)\label{alg:line:init_start}}{
          Sample \((x_i,y_i)\sim p_\theta(x,y)\)\;
          Compute robustness \(R_i \gets R(x_i,y_i)\)\;
          Add \(((x_i,y_i),-R_i)\) to dataset \(\mathcal{H}\)\;
          \If{\(R_i < 0\)}{
              \Return \((x_i,y_i)\)\;
          }
      }\label{alg:line:init_end}
  }
  Train surrogate model \(S_{k,\theta}\) on \(\mathcal{H}\)\label{alg:line:surrogate_train}\;

  \If{simulation budget exhausted}{break\;}

  \For{\(i=1,\dots,M\)\label{alg:line:guided_start}}{
      \(z_T \sim \mathcal{N}(0,I)\)\;
      \For{\(t=T,\dots,1\)}{
          Compute guided noise prediction by Eq.~\eqref{eq:modified_noise_ddpm_clean}\;
          Apply the guided DDPM reverse update using Eq.~\eqref{eq:ddpm_prev_time_clean}\;
      }
      Obtain candidate \((x_i,y_i)\) from \(z_0\)\;
  }\label{alg:line:guided_end}

  Define selection distribution \(p_i \propto \exp(\beta_k S_{k,\theta}(x_i,y_i))\)\;
  Sample scenario index \(\hat{i}\sim p_i\) and simulate \(\Phi(x_{\hat{i}})\)\label{alg:line:evaluate}\;
  Compute robustness \(R(x_{\hat{i}},\Phi(x_{\hat{i}}))\)\;
  \If{\(R(x_{\hat{i}},\Phi(x_{\hat{i}}))<0\)}{
      \Return \((x_{\hat{i}},\Phi(x_{\hat{i}}))\)\;
  }
  Add \(\big((x_{\hat{i}},y_{\hat{i}}),-R(x_{\hat{i}},y_{\hat{i}})\big)\) to \(\mathcal{H}\)\;
  Retrain surrogate model \(S_{k+1,\theta}\) on updated \(\mathcal{H}\)\label{alg:line:surrogate_retrain}\;
}
\Return No falsification found within simulation budget\;
\end{algorithm}
\section{Numerical Results}\label{sec::numerical-results}

This section evaluates the proposed \fw framework in the context of falsification. 
We describe the benchmark systems and evaluation protocol, and compare against 
a state-of-the-art falsification method, FReaK~\cite{bak2025fast}, across multiple specifications per system.

\subsection{Benchmark Algorithm and Systems}

We introduce the baseline method and benchmarks, followed by the experimental procedure and scoring definitions.

\paragraph{Baseline}
FReaK~\cite{bak2025fast} is a falsification method based on Koopman operator linearization, which lifts nonlinear dynamics into a higher-dimensional space where they evolve linearly. It constructs a data-driven surrogate model and searches for inputs that maximize the robustness of the negated STL specification using a weighted robustness encoding, reducing the optimization to a linear program. The weights are iteratively refined based on critical points identified through simulations of the true system. FReaK consistently achieves high falsification rates with low simulation budgets across ARCH-COMP benchmarks~\cite{bak2025fast, arch2025khandait}. 
While other approaches exist, no single method uniformly dominates across all problems~\cite{arch2025khandait}. 
Thus, comparing against FReaK provides a meaningful and rigorous evaluation of our approach.

\rtk{\paragraph{Benchmark Systems}
We consider the Automatic Transmission (AT) and Chasing Cars (CC) benchmarks from ARCH-COMP~\cite{arch2025khandait}, along with a tractor-trailer (TT2D) scenario~\cite{kim2026safempd}, additionally introduced here as a falsification benchmark with instances of varying difficulty. We compare against FReaK under a fixed simulation budget and briefly summarize each benchmark next.}

\noindent\textbf{Automatic Transmission (AT)}
The AT system is a deterministic hybrid system modeling an automatic transmission controller with both continuous and discrete dynamics~\cite{arch2025khandait}. The system has two input time-varying signals: throttle $u_t \in [0,100]$[\%] and brake $u_b \in [0,325]$[ft-lb].
Given an input trajectory, the system produces output trajectories consisting of vehicle speed $v$[mph], engine speed $\omega$[RPM], and gear indicators $g_1,\dots,g_4$. The simulation horizon is $50$[s] with a sampling interval of $0.01$[s]. The STL specifications are listed in Table~\ref{tab:stl-specifications}. \linelabel{ln:at-framing}\rtk{AT has multiple STL specifications over fixed system dynamics, allowing us to evaluate the generalization capability of a single learned prior across diverse specifications without retraining. Additionally, specifications like $AT5g$ (Table~\ref{tab:stl-specifications}) also allows us to test the method under discrete dynamics, where gradient-based guidance is predicted to be ineffective due to flat robustness surfaces. }

\noindent\textbf{Chasing Cars (CC)}
The CC is a dynamical system modeling a platoon of five vehicles~\cite{arch2025khandait}. It has two input signals, throttle $u_t \in [0,1]$ and brake $u_b \in [0,1]$, applied to the lead vehicle. 
The remaining vehicles follow a fixed control law based on the state of the preceding vehicle. 
Given an input trajectory, the system produces output trajectories consisting of the longitudinal positions $y_1,\dots,y_5$ of the five vehicles. The simulation horizon is $100$[s] with a sampling interval 
of $0.05$[s]. The STL specifications are listed in Table~\ref{tab:stl-specifications}. \linelabel{ln:cc-framing}\rtk{CC also aids in testing the reusability of a single learned prior across long-horizon and nested specifications without retraining. }

\newcolumntype{L}[1]{>{\raggedright\arraybackslash}p{#1}}
\begin{table*}[t]
\centering
\caption{STL specifications with informal descriptions for the Automatic Transmission (AT), Chasing Cars (CC), and Tractor-Trailer (TT2D) benchmarks. 
$\Box$ denotes ``always'', $\Diamond$ denotes ``eventually'', and $\circ \phi$ denotes $\Diamond_{[0.001,0.1]} \phi$.}
\label{tab:stl-specifications}
\scriptsize
\begin{tabularx}{\textwidth}{L{0.6cm} L{5.5cm} X}
\toprule
\textbf{Key} & \textbf{STL Formula} & \textbf{Informal Description} \\

\midrule

AT1 & $\Box_{[0,20]} v < 120$ 
& Speed remains below 120 mph for the first 20 seconds. \\

AT2 & $\Box_{[0,10]} \omega < 4750$ 
& Engine RPM remains below 4750 for the first 10 seconds. \\




AT5$g$ & $\Box_{[0,30]} \big((\lnot g \land \circ g) \rightarrow \circ \Box_{[0,2.5]} g\big)$ 
& Whenever gear $g \in \{1,2,3,4\}$ is engaged, it remains engaged for at least 2.5 seconds. \\

AT6a & $(\Box_{[0,30]} \omega < 3000) \to (\Box_{[0,4]} v < 35)$ 
& If RPM stays below 3000 over the horizon, then speed stays below 35 mph for 4 seconds. \\

AT6b & $(\Box_{[0,30]} \omega < 3000) \to (\Box_{[0,8]} v < 50)$ 
& If RPM stays below 3000 over the horizon, then speed stays below 50 mph for 8 seconds. \\

AT6c & $(\Box_{[0,30]} \omega < 3000) \to (\Box_{[0,20]} v < 65)$ 
& If RPM stays below 3000 over the horizon, then speed stays below 65 mph for 20 seconds. \\

AT6abc & AT6a $\land$ AT6b $\land$ AT6c 
& All three conditional speed constraints must hold simultaneously. \\

\midrule

CC1 & $\Box_{[0,100]} (y_5 - y_4 \leq 40)$ 
& The distance between car 5 and car 4 is always at most 40 units. \\

CC2 & $\Box_{[0,70]} \Diamond_{[0,30]} (y_5 - y_4 \geq 15)$ 
& At all times up to 70 s, within the next 30 s the distance becomes at least 15. \\

CC3 & $\Box_{[0,80]} ((\Box_{[0,20]} (y_2 - y_1 \leq 20)) \lor (\Diamond_{[0,20]} (y_5 - y_4 \geq 40)))$ 
& At all times, either cars 1–2 stay within 20 units for 20 s, or cars 4–5 become separated by at least 40 within 20 s. \\

CC4 & $\Box_{[0,65]} \Diamond_{[0,30]} \Box_{[0,5]} (y_5 - y_4 \geq 8)$ 
& At all times, within 30 s there exists a 5 s interval where the distance stays above 8. \\

CC5 & $\Box_{[0,72]} \Diamond_{[0,8]} ((\Box_{[0,5]} (y_2 - y_1 \geq 9)) \to (\Box_{[5,20]} (y_5 - y_4 \geq 9)))$ 
& At all times, within 8 s: if cars 1–2 stay at least 9 apart for 5 s, then cars 4–5 must stay at least 9 apart during the following 5–20 s. \\

CCx & $\bigwedge_{i=1}^{4} \Box_{[0,50]} (y_{i+1} - y_i > 7.5)$ 
& All adjacent cars maintain a separation greater than 7.5 units at all times. \\

\midrule

TT2D 
& $\neg \Diamond_{[0,T]} \bigl(\|(x,y)-(x^g,y^g)\|_2 \le \epsilon_p \land |\theta_1-\theta_1^g| \le \epsilon_\theta \land |\theta_2-\theta_2^g| \le \epsilon_\theta \bigr)$ 
& The system never reaches the goal region within the horizon (i.e., position and orientations do not satisfy tolerance bounds simultaneously). \\
\bottomrule
\end{tabularx}
\end{table*}

\noindent\textbf{Tractor-Trailer (TT2D:)}
The TT2D system is a deterministic nonlinear model of a tractor–trailer operating in a bounded 2D environment with obstacles. We adopt the model from~\cite{kim2026safempd} and reformulate it as a falsification problem.
\tk{The system has two control inputs: longitudinal velocity $v \in [-3,3]$[m/s] and steering angle $\delta \in [-0.95,0.95]$[rad]. Given an input trajectory, the system produces state trajectories consisting of position $(x,y)$, tractor heading $\theta_1$, and trailer heading $\theta_2$, within spatial limits $X,Y \in (-32,32)$[m].}

The objective is to generate a feasible trajectory from an initial state $S^s = [x^s, y^s, \theta_1^s, \theta_2^s]$ to a goal state $S^g = [x^g, y^g, \theta_1^g, \theta_2^g]$ within a horizon of $500$\,[s] sampled at time interval $0.25$\,[s]. Feasibility, including collision avoidance and control constraints, is enforced via projection, restricting the search to admissible trajectories. The goal region is defined by position and orientation tolerances:
$\|(x,y)-(x^g,y^g)\|_2 \le \epsilon_p$, 
$|\theta_1-\theta_1^g| \le \epsilon_\theta$, and 
$|\theta_2-\theta_2^g| \le \epsilon_\theta$, 
with angles wrapped to $[-\pi,\pi]$. 
Smaller tolerances correspond to harder instances.

For TT2D, unlike AT and CC, \fw defines a trajectory-level objective
$R(x,y) = - \left(l_T(y_T) + \sum_{t=0}^T l_t(y_t, x_t)\right),$
where $l_T$ penalizes deviation from the goal and $l_t$ accumulates stage costs over the horizon. This highlights a limitation of the current FReaK framework: it cannot directly represent such trajectory-level objectives, as it requires STL-based formulations~\cite{bak2025fast}. Therefore, we approximate it using the negation of goal reachability defined over the goal region (Table~\ref{tab:stl-specifications}). 
\rtk{For consistency, all trajectories generated by \fw are evaluated against the same TT2D STL specification as FReaK, so both methods are compared on identical criteria, with the trajectory-level cost serving only as the guidance signal within \fw.}

We consider six scenarios:
(S1) \textit{no obstacles (easy)} with relaxed tolerances $(\epsilon_p, \epsilon_\theta) = (1.25, 0.5)$,
(S2) \textit{no obstacles (hard)} with tighter tolerances $(0.2, 0.2)$,
(S3) \textit{one obstacle}: a single rectangular obstacle placed between the start and goal regions,
(S4) \textit{wide parking}: a constrained parking configuration with boundaries on the sides and one end,
(S5) \textit{narrow parking}: a parking configuration similar to S4 but with reduced available space,
(S6) \textit{random obstacles}: multiple small obstacles randomly distributed in the environment. \linelabel{ln:tt2d-framing}\rtk{TT2D additionally tests whether \alg can exploit a richer guidance signal than the STL specification alone.}

\paragraph{Experimental Procedure}
For AT and CC, we follow the ARCH-COMP evaluation protocol~\cite{arch2025khandait}, reporting simulations required for falsification within a budget of $B + 30$, with initial batch sizes $B \in \{4,8,16\}$; FReaK results are taken from~\cite{arch2025khandait}. For TT2D, both methods start from identical initial states with a budget of $B + 200$ ($B = 16$). All results are averaged over 10 independent runs.

All\rtk{\linelabel{ln:scoring-func-start} input signals are normalized to $[-1,1]$ before training and sampling, and denormalized to their physical range before simulation. From the generated pair $z_0 = (x, y)$, only $x$ is 
passed to the simulator. For AT and CC, $R(x,y)$ is the STL robustness of the specification under test. For TT2D, $R(x,y)$ is a trajectory-level goal-reaching cost, with falsification confirmed against the TT2D STL specification, so both methods are evaluated on identical criteria. Robustness targets are standardized against the empirical standard deviation of values in $\mathcal{H}$ at each retraining step, stabilizing gradient magnitudes as the dataset grows\linelabel{ln:scoring-func-end}.

The \linelabel{ln:beta-search-start}tilting strength $\beta_k$ was selected via a coarse pilot search over $\{1, 10, 20, 50, 100\}$; values below $1$ had negligible effect and values above $100$ consistently degraded performance. The values used are $\beta = 10$ for TT2D, $\beta = 20$ for AT, and $\beta = 100$ for CC. A systematic sensitivity analysis was not conducted, and adaptive tuning of $\beta$ is identified as a future direction\linelabel{ln:beta-search-end}.}

\subsection{Results and Discussion}\label{subsec::results-and-discussion}
In \rtk{the offline phase, the joint prior is implemented as two coupled UNet branches\footnote{\url{https://huggingface.co/docs/diffusers/api/models/unet}}, one for input signals $x$ and one for output trajectories $y$, connected via zero-initialized skip adapters following the JointNet architecture~\cite{zhang2023jointnet}. The model is trained once for 300 epochs with a DDPM scheduler ($T=1000$), \linelabel{ln:training-cost-start}requiring 
under 4 hours on a single NVIDIA A100 GPU. This offline cost is amortized across all downstream specifications, since the same weights are reused without retraining \linelabel{ln:training-cost-end}.}

\rtk{In the online phase,} we measure performance according to the falsification rate (FR) and mean number of simulations required (S), reported in Tables~\ref{tab:arch_results} and~\ref{tab:tt2d_results}. 
In addition to FR and S, we also report the mean runtime ($\bar{T_f}$) and standard error ($\text{SE}_f$) of falsifying runs for TT2D. 

\rtk{We now present the discussion of the results on the individual benchmarks followed by a summary.}

\begin{table*}[t]
\centering
\scriptsize
\caption{Falsification performance on ARCH benchmarks (AT and CC). 
FR denotes falsification rate (out of 10 runs) and S denotes the mean number of simulations required for successful runs. }
\label{tab:arch_results}
\resizebox{0.7\linewidth}{!}{
\begin{tabular}{@{}crrrrrrrrrrrrrr@{}}
\toprule
\textbf{Benchmark} 
& \multicolumn{2}{c}{\textbf{FReaK}} 
& \multicolumn{6}{c}{\textbf{\alg (500 Points)}} 
& \multicolumn{6}{c}{\textbf{\alg (4000 Points)}} \\

\cmidrule(l){2-3} 
\cmidrule(l){4-9} 
\cmidrule(l){10-15}

\multicolumn{1}{l}{} 
& \multicolumn{2}{l}{} 
& \multicolumn{2}{c}{\textbf{B=4}} 
& \multicolumn{2}{c}{\textbf{B=8}} 
& \multicolumn{2}{c}{\textbf{B=16}} 
& \multicolumn{2}{c}{\textbf{B=4}} 
& \multicolumn{2}{c}{\textbf{B=8}} 
& \multicolumn{2}{c}{\textbf{B=16}} \\
\cmidrule{1-1} 
\cmidrule(l){2-3} 
\cmidrule(l){4-5} 
\cmidrule(l){6-7} 
\cmidrule(l){8-9}
\cmidrule(l){10-11} 
\cmidrule(l){12-13} 
\cmidrule(l){14-15}

\multicolumn{1}{l}{} 
& \multicolumn{1}{r}{\textbf{FR}} 
& \multicolumn{1}{r}{\textbf{S}} 
& \multicolumn{1}{r}{\textbf{FR}} 
& \multicolumn{1}{r}{\textbf{S}} 
& \multicolumn{1}{r}{\textbf{FR}} 
& \multicolumn{1}{r}{\textbf{S}} 
& \multicolumn{1}{r}{\textbf{FR}} 
& \multicolumn{1}{r}{\textbf{S}} 
& \multicolumn{1}{r}{\textbf{FR}} 
& \multicolumn{1}{r}{\textbf{S}} 
& \multicolumn{1}{r}{\textbf{FR}} 
& \multicolumn{1}{r}{\textbf{S}} 
& \multicolumn{1}{r}{\textbf{FR}} 
& \multicolumn{1}{r}{\textbf{S}} \\\midrule
\textbf{AT1} & 10 & 4.8 & 0 & -- & 0 & -- & 0 & -- & 0 & -- & 0 & -- & 0 & -- \\
\textbf{AT2} & 10 & 2.1 & 10 & 6.3 & 10 & 9.2 & 10 & 9.0 & 10 & 8.1 & 10 & 11.2 & 10 & 10.6 \\
\textbf{AT51} & 10 & 8.7 & 0 & -- & 0 & -- & 0 & -- & 0 & -- & 0 & -- & 0 & -- \\
\textbf{AT52} & 10 & 1.3 & 10 & 4.9 & 10 & 6.9 & 10 & 6.7 & 10 & 2.7 & 10 & 3.0 & 10 & 3.5 \\
\textbf{AT53} & 10 & 1.1 & 0 & -- & 0 & -- & 0 & -- & 7 & 16.4 & 8 & 13.0 & 10 & 12.1 \\
\textbf{AT54} & 10 & 2.4 & 0 & -- & 0 & -- & 0 & -- & 0 & -- & 0 & -- & 0 & -- \\
\textbf{AT6a} & 10 & 7.4 & 6 & 21.0 & 6 & 11.17 & 6 & 31.5 & 9 & 19.6 & 9 & 20.9 & 9 & 20.6 \\
\textbf{AT6b} & 10 & 6.2 & 0 & -- & 0 & -- & 2 & 20.0 & 9 & 24.2 & 10 & 24.7 & 10 & 25.5 \\
\textbf{AT6c} & 10 & 5.9 & 4 & 20.8 & 3 & 25.3 & 7 & 20.2 & 10 & 28.3 & 8 & 25.5 & 10 & 23.5 \\
\textbf{AT6abc} & 10 & 6.4 & 5 & 26.4 & 7 & 19.1 & 8 & 27.6 & 10 & 18.2 & 9 & 21.2 & 10 & 22.8\\
\cmidrule(lr){1-1} 
\cmidrule(lr){2-3} 
\cmidrule(lr){4-5} 
\cmidrule(lr){6-7} 
\cmidrule(lr){8-9}
\cmidrule(lr){10-11} 
\cmidrule(lr){12-13} 
\cmidrule(lr){14-15}

\textbf{CC1} & 10 & 3.6 & 6 & 11.2 & 7 & 12.6 & 7 & 17.1 & 9 & 9.6 & 10 & 9.6 & 10 & 12.6 \\
\textbf{CC2} & 10 & 3 & 5 & 10.4 & 6 & 13.9 & 7 & 16.8 & 8 & 8.5 & 9 & 11.9 & 10 & 9.3 \\
\textbf{CC3} & 10 & 5.7 & 6 & 17.9 & 6 & 13.8 & 6 & 24.2 & 9 & 15.6 & 9 & 10.4 & 9 & 18.3 \\
\textbf{CC4} & 10 & 176.7 & 1 & 16.8 & 2 & 16.5 & 2 & 27.4 & 3 & 14.7 & 2 & 15.0 & 3 & 25.3 \\
\textbf{CC5} & 10 & 48.2 & 6 & 10.3 & 6 & 19.6 & 7 & 23.2 & 8 & 8.6 & 8 & 17.3 & 9 & 19.7 \\
\textbf{CCx} & 10 & 110.5 & 5 & 11.6 & 6 & 14.4 & 6 & 22.1 & 10 & 10.3 & 10 & 13.0 & 10 & 18.1 \\
\bottomrule
\end{tabular}
}
\end{table*}

\rtk{\paragraph{Automatic Transmission (AT)} On the AT benchmarks (Table~\ref{tab:arch_results}), \fw is not competitive with FReaK. FReaK falsifies every specification at low simulation counts, helped by a Koopman surrogate that captures the dynamics accurately and AT specifications that need only a few time indices for robustness encoding~\cite{bak2025fast}. \fw fails outright on three specifications, each for an identifiable reason. AT1 follows a broader ARCH-COMP pattern~\cite{arch2025khandait} in which most tools fail under smooth input interpolation. AT51 and AT54 have discrete gear signals that flatten the robustness landscape, since many distinct input signals map to the same robustness value and the guidance gradient becomes uninformative. \linelabel{ln:at-flatgrad}\rgp{This is exactly the regime our analysis flags as unfavorable, since Theorem~\ref{thm::faiAmplv2} guarantees that tilting raises the failure probability only when the surrogate tends to score failing scenarios above non-failing ones, an ordering that a flattened landscape erases.} This effect, however, is specification-specific rather than systematic, as the remaining gear specifications AT52 and AT53 are still falsified at 4000 points. We did not run experiments targeting these flat-gradient cases and note them as future work\linelabel{ln:at-flatgrad-end}.}

\paragraph{Chasing Cars (CC)} On CC1-CC3, shown in Table~\ref{tab:arch_results}, both methods achieve high falsification rates at 4000 points, though FReaK requires fewer simulations. The picture changes on CC4, CC5, and CCx, which involve longer horizons and nested specifications. 
These require a large number of time indices for accurate robustness encoding~\cite{bak2025fast}, making FReaK's optimization increasingly expensive and forcing a coarser discretization that reduces surrogate accuracy. \linelabel{ln:CC-theory-conn}\rgp{\alg avoids this cost, since it is guided by the surrogate score of the STL robustness evaluated on the realized trace rather than by a logical encoding of the formula, so the nesting and horizon length that drive up the baseline's optimization do not enter \alg's per-iteration cost}\linelabel{ln:CC-theory-conn-end}. On CCx, \alg achieves full falsification in 10-18 simulations compared to 110.5 for FReaK. On CC5, \alg reaches comparable falsification rates with far fewer simulations. On CC4, \alg does not match FReaK's falsification rate but requires far fewer simulations in successful runs.

\paragraph{Tractor-Trailer (TT2D)} 

On TT2D, \alg at 4000 points achieves falsification success rate of 1.0 across all six scenarios. In S1, S2, S3, and S6, FReaK succeeds in only a fraction of runs requiring significantly more simulations and runtime, often by an order of magnitude. In contrast, \alg maintains success rate of 1.0 on all four scenarios with far fewer simulations and substantially lower runtime. 
\linelabel{ln:TT2D-theory-conn}\rgp{This holds for two reasons. The first is that the score-based robustness is better able to guide search in environments where the admissible search space is large but the falsifying region constitutes only a small subset, a case where optimization-based approaches tend to struggle. A second reason is the guidance signal itself. Our analysis requires only that the score rank failing scenarios above non-failing ones, so \alg can be driven by the richer trajectory-level goal-reaching cost while still being confirmed against the same goal-reachability specification as FReaK, and these results show that this richer signal succeeds where an objective tied to the specification alone stalls.}\linelabel{ln:TT2D-theory-conn-end}
\rgp{The advantage does not extend to Wide (S4) and Narrow Parking (S5), where model-enforced feasibility projections tightly restrict admissible trajectories and both methods perform comparably. This suggests difficulty in these scenarios is driven more by the geometry of the feasible set than by the search procedure.} On these two scenarios, FReaK exhibits lower runtime than \fw, highlighting its advantage in such scenarios.

\begin{table*}
\centering
\caption{Falsification performance on TT2D scenarios. 
FR denotes falsification rate (out of 10 runs), S the mean number of simulations for successful runs, $\bar{T_f}$ the mean runtime (seconds), and $\text{SE}_f$ its standard error over falsifying runs. }
\label{tab:tt2d_results}
\resizebox{0.9\textwidth}{!}{%
\begin{tabular}{lrrrrrrrrrrrr}
\toprule
\textbf{Benchmark}
& \multicolumn{4}{c}{\textbf{FReaK}}
& \multicolumn{4}{c}{\textbf{\alg (500 Points, B=16)}}
& \multicolumn{4}{c}{\textbf{\alg (4000 Points, B=16)}} \\
\cmidrule(l){2-5}
\cmidrule(l){6-9}
\cmidrule(l){10-13}
& \textbf{FR} & \textbf{S}
& \textbf{$\bar{T_f}$} & \textbf{$\text{SE}_f$}
& \textbf{FR} & \textbf{S}
& \textbf{$\bar{T_f}$} & \textbf{$\text{SE}_f$}
& \textbf{FR} & \textbf{S}
& \textbf{$\bar{T_f}$} & \textbf{$\text{SE}_f$} \\
\midrule
\textbf{(S1) No Obs (Easy)}
& 8 & 98.125 & 3131.35 & 627.1
& 10 & 46.7 & 1641.26 & 534.46
& 10 & 19.8 & 238.15 & 55.9 \\
\textbf{(S2) No Obs (Hard)}
& 1 & 133 & 3982.7 & -
& 0 & - & - & -
& 10 & 30.9 & 1562.87 & 738.64 \\
\textbf{(S3) Single Obs}
& 3 & 120.67 & 4280.3 & 1072.42
& 5 & 69.8 & 2345.69 & 964.52
& 10 & 37.7 & 1174.02 & 685.76 \\
\textbf{(S4) Wide Parking}
& 10 & 26.8 & 1205.88 & 350.93
& 2 & 110 & 6993.06 & 6912.36
& 10 & 56.3 & 2146.61 & 512.92 \\
\textbf{(S5) Narrow Parking}
& 10 & 50.8 & 2141.02 & 827.19
& 1 & 129 & 3805.55 & -
& 10 & 46.6 & 2507.91 & 800.44 \\
\textbf{(S6) Random Obs}
& 1 & 110 & 5382.2 & -
& 0 & - & - & -
& 10 & 44.9 & 1856.38 & 790.54 \\
\bottomrule
\end{tabular}
}
\end{table*}

The 500-point results reflect the outcome of insufficient trajectory coverage during training. When training data does not cover the trajectory space well, the learned prior cannot provide reliable guidance, and falsification rates drop. This represents an upfront investment, but one that is amortized as the number of specifications increase since the prior is reusable across tasks without retraining. Whereas FReaK requires optimization from scratch per specification. Notably, even with 500 points, \alg exceeds FReaK on S1 and S3, despite using a naive implementation of the JointNet~\cite{zhang2023jointnet}. 

\paragraph{Summary} Overall, \alg provides two key advantages for the benchmarks considered. First, instead of formulating and solving a new optimization problem for each specification, it learns a joint prior over inputs and trajectories that captures system behavior and can be reused across specifications. Second, falsification is performed by guiding this prior using any function that evaluates the resulting execution (e.g., a robustness measure, reward, objective function, or expert-defined score) without requiring STL encodings or decomposition into logical primitives, making the method applicable across a wide range of tasks.

\section{Conclusion}\label{sec::conclusion}


\fw casts \rtk{falsification as the exponential tilting of a diffusion-induced joint distribution over environments and executions toward failures. This tilting is the KL-optimal reweighting of the prior, and it admits an exact importance-sampling interpretation in the joint space. It provably amplifies failure probability under mild score-ranking assumptions, and strictly overcomes the multiplicative rarity bottleneck that caps conditional sampling. For deterministic systems, it reduces to optimal importance sampling over inputs along the dynamics manifold, so every proposed counterexample is validated by the true simulator while the prior need never match the system under test.

Empirically, a single joint prior trained once and reused without retraining remains competitive with state-of-the-art falsification on the standard ARCH-COMP specifications and outperforms it on long-horizon, nested, and non-STL objectives. The simulator is called only on selected candidates, and those evaluations train the surrogate whose gradients guide denoising. Expensive system calls therefore stay confined to scoring, keeping the procedure sample-efficient and compatible with black-box pipelines.


Several aspects require further investigation. The tilting strength $\beta$ is presently fixed per benchmark from a coarse pilot search, leaving a principled adaptive schedule as a clear next step. Large $\beta$ drives mass into high-score regions where the prior is least accurate and can push denoised samples off the data manifold. \linelabel{ln:mismatch-start}Because only $x$ is simulated, the gap between the generated trace and the true rollout $\Phi(x)$ is observable at no extra cost, and feeding it back as a mismatch-correction term during denoising would keep samples on-manifold and refine the prior online. \linelabel{ln:ess-future-start}Similarly, guided diffusion shifts the proposal rather than reweighting a fixed sample set, so ESS degradation does not affect our results, though a formal account of this interaction remains open\linelabel{ln:ess-future-end}.

The surrogate is a second lever. \linelabel{ln:conclusion-hybrid-lim-start}In discrete or hybrid dynamics, such as the AT5 gear signals, the surrogate gradient is uninformative and sampling reverts to the unguided prior, preserving coverage but not driving failures\linelabel{ln:conclusion-hybrid-lim-end}, \linelabel{ln:conclusion-hybrid-fut-start}motivating surrogates that exploit hybrid structure and samplers that retain diversity where gradients vanish\linelabel{ln:conclusion-hybrid-fut-end}. 
On the systems side, DDPM inference and the coupled architecture limit scaling, motivating faster samplers such as DDIM and more efficient joint architectures. Finally, the extension of the analysis to stochastic systems, alongside applications to planning, synthesis, and robotics, are natural directions for 
\fw}.

\bibliographystyle{splncs04}
\bibliography{emsoft_ref}

@book{DemboZeitouni1998LDP,
  title     = {Large Deviations Techniques and Applications},
  author    = {Dembo, Amir and Zeitouni, Ofer},
  year      = {1998},
  publisher = {Springer},
  doi       = {10.1007/978-1-4612-5320-4}
}

@inproceedings{Ho2020DDPM,
  title     = {Denoising Diffusion Probabilistic Models},
  author    = {Ho, Jonathan and Jain, Ajay and Abbeel, Pieter},
  booktitle = {Advances in Neural Information Processing Systems},
  year      = {2020},
  doi       = {10.48550/arXiv.2006.11239}
}

@inproceedings{dreossi2018semantic,
  title={Semantic adversarial deep learning},
  author={Dreossi, Tommaso and Jha, Somesh and Seshia, Sanjit A},
  booktitle={International Conference on Computer Aided Verification},
  pages={3--26},
  year={2018},
  organization={Springer}
}

@inproceedings{vin20233d,
  title={3d environment modeling for falsification and beyond with scenic 3.0},
  author={Vin, Eric and Kashiwa, Shun and Rhea, Matthew and Fremont, Daniel J and Kim, Edward and Dreossi, Tommaso and Ghosh, Shromona and Yue, Xiangyu and Sangiovanni-Vincentelli, Alberto L and Seshia, Sanjit A},
  booktitle={International Conference on Computer Aided Verification},
  pages={253--265},
  year={2023},
  organization={Springer}
}

@inproceedings{fremont2020formal,
  title={Formal analysis and redesign of a neural network-based aircraft taxiing system with VerifAI},
  author={Fremont, Daniel J and Chiu, Johnathan and Margineantu, Dragos D and Osipychev, Denis and Seshia, Sanjit A},
  booktitle={International Conference on Computer Aided Verification},
  pages={122--134},
  year={2020},
  organization={Springer}
}

@book{bucklew2004introduction,
  title={Introduction to rare event simulation},
  author={Bucklew, James Antonio and Bucklew, J},
  volume={5},
  year={2004},
  publisher={Springer}
}

@article{csiszar1975idivergence,
  title={I-divergence geometry of probability distributions and minimization problems},
  author={Csisz{\'a}r, Imre},
  journal={The Annals of Probability},
  volume={3},
  number={1},
  pages={146--158},
  year={1975},
  doi={10.1214/aop/1176996454}
}

@book{cover2006elements,
  title={Elements of Information Theory},
  author={Cover, Thomas M. and Thomas, Joy A.},
  year={2006},
  publisher={Wiley},
  doi={10.1002/047174882X}
}

@article{donsker1975asymptotic,
  title={Asymptotic evaluation of certain Markov process expectations},
  author={Donsker, Monroe D. and Varadhan, S. R. Srinivasa},
  journal={Communications on Pure and Applied Mathematics},
  volume={28},
  number={1},
  pages={1--47},
  year={1975},
  doi={10.1002/cpa.3160280102}
}

@inproceedings{song2019generative,
  title={Generative Modeling by Estimating Gradients of the Data Distribution},
  author={Song, Yang and Ermon, Stefano},
  booktitle={Advances in Neural Information Processing Systems},
  year={2019},
  doi={10.48550/arXiv.1907.05600}
}

@inproceedings{song2021scorebased,
  title={Score-Based Generative Modeling through Stochastic Differential Equations},
  author={Song, Yang and Sohl-Dickstein, Jascha and Kingma, Diederik P. and Kumar, Abhishek and Ermon, Stefano and Poole, Ben},
  booktitle={International Conference on Learning Representations},
  year={2021},
  doi={10.48550/arXiv.2011.13456}
}

@inproceedings{dhariwal2021diffusion,
  title={Diffusion Models Beat GANs on Image Synthesis},
  author={Dhariwal, Prafulla and Nichol, Alex},
  booktitle={Advances in Neural Information Processing Systems},
  year={2021},
  doi={10.48550/arXiv.2105.05233}
}

@inproceedings{zhang2023jointnet,
title = {JointNet: Extending Text-to-Image Diffusion for Dense Distribution Modeling},
booktitle = {ICLR},
author = {Jingyang Zhang and Shiwei Li and Yuanxun Lu and Tian Fang and David McKinnon and Yanghai Tsin and Long Quan and Yao Yao},
year = {2024},
URL = {https://arxiv.org/abs/2310.06347}
}

@inproceedings{arch2025khandait,
  author    = {Tanmay Khandait and Deyun Lyu and Paolo Arcaini and Georgios Fainekos and Federico Formica and Sauvik Gon and Abdelrahman Hekal and Atanu Kundu and Claudio Menghi and Giulia Pedrielli and Rajarshi Ray and Quinn Thibeault and Masaki Waga and Zhenya Zhang},
  title     = {ARCH-COMP25 Category Report: Falsification},
  booktitle = {Proceedings of 12th Int. Workshop on Applied Verification for Continuous and Hybrid Systems},
  editor    = {Goran Frehse and Matthias Althoff},
  series    = {EPiC Series in Computing},
  volume    = {108},
  publisher = {EasyChair},
  bibsource = {EasyChair, https://easychair.org},
  issn      = {2398-7340},
  url       = {/publications/paper/xX5W},
  doi       = {10.29007/dgnn},
  pages     = {169-189},
  year      = {2025}}

@InProceedings{bak2025fast,
author="Bak, Stanley
and Hekal, Abdelrahman
and Kochdumper, Niklas
and Lew, Ethan
and Mata, Andrew
and Rahmati, Amir",
editor="Akshay, S.
and Niemetz, Aina
and Sankaranarayanan, Sriram",
title="Fast Koopman Surrogate Falsification Using Linear Relaxations and Weights",
booktitle="Automated Technology for Verification and Analysis",
year="2025",
publisher="Springer Nature Switzerland",
address="Cham",
pages="234--255",
isbn="978-3-031-78750-8"
}

@inproceedings{kim2026safempd, 
	  author    = {Kim, Taekyung and Majd, Keyvan and Okamoto, Hideki and Hoxha, Bardh and Panagou, Dimitra and Fainekos, Georgios},
	  title     = {Safe Model Predictive Diffusion with Shielding},
    booktitle = {IEEE International Conference on Robotics and Automation (ICRA)},
    shorttitle = {Safe MPD},
    year      = {2026}
}

\section{Appendix}\label{sec::appendix}
In this appendix we provide the detailed proofs of the results not shown in the main manuscript.
\vspace{-2pt}
\subsection{Proof of Proposition~\ref{prop::condBottleneck}}\label{app:prop_1_proof}
\begin{proof}
Let $(X,Y)$ denote one draw from the conditional sampling procedure:
$X\sim q,$ and $Y\sim p_\theta(\cdot\mid X).$
By assumption $p_\theta(y\mid x)=p^\star(y\mid x)$ for all $x$, so for any measurable $B\subset\mathcal Y$,
\[
\mathbb P(Y\in B\mid X=x)=\int_B p_\theta(y\mid x)\,dy=\int_B p^\star(y\mid x)\,dy.
\]
Define the conditional failure probability under the true dynamics as
\[
f^\star(x)\ :=\ \int_{\mathcal Y}\mathbf 1_F(x,y)\,p^\star(y\mid x)\,dy
\ =\ \mathbb P\big((x,Y)\in F \mid X=x\big).
\]
By assumptions (ii)--(iii), $f^\star(x)=\varepsilon$ for $x\in A$ and $f^\star(x)=0$ for $x\notin A$.

First, we show how, under $p^\star$ we have
\begin{align*}
p^\star(F)
=\int_{\mathcal X} p^\star(x)\,f^\star(x)\,dx
=\int_A p^\star(x)\,\varepsilon\,dx
&=\varepsilon\,p^\star(X\in A)
\end{align*}
which proves $p^\star(F)=\delta\varepsilon$ since $p^\star(X\in A)=\delta$.

We now derive the failure probability induced by the conditional sampler. Under the sampler distribution induced by $q$ and $p_\theta(\cdot\mid x)=p^\star(\cdot\mid x)$,
\[
\mathbb P(F)
=\int_{\mathcal X} q(x)\,f^\star(x)\,dx
=\int_A q(x)\,\varepsilon\,dx
=\varepsilon\,q(A).
\]
If $q(A)\le c\,\delta$, then
$\mathbb P(F)\le c\,\delta\varepsilon.$

Finally, let $T$ be the number of i.i.d. draws until the first failure event occurs. 
Each draw can be modeled as an independent Bernoulli trial with success probability $\mathbb P(F)$, we have $T\sim\mathrm{Geom}(\mathbb P(F))$ (counting trials until first success), and therefore:

$\mathbb E[T]=\frac{1}{\mathbb P(F)}\ \ge\ \frac{1}{c\,\delta\varepsilon}
=\Omega\big((\delta\varepsilon)^{-1}\big).$
\end{proof}

\subsection{ Proof of Lemma~\ref{lem::admisstilt}}\label{app:lemma_1_proof}
\begin{proof}
Let $Z(\beta):=\int_{\mathcal Z} p(z)\exp(\beta S(z))\,dz$ and define
$\mathcal B:=\{\beta\in\mathbb R:\ Z(\beta)<\infty\}$.

\paragraph{Non-emptiness.}
Since $p$ is a probability density, $\int p(z)\,dz=1$. Hence
\begin{align}
Z(0)=\int p(z)\exp(0)\,dz=\int p(z)\,dz=1<\infty,
\end{align}
so $0\in\mathcal B$.

\paragraph{Interval property.}
We show that $\mathcal B$ is an interval (i.e., convex). Let $\beta_1,\beta_2\in\mathcal B$ and
$\alpha\in[0,1]$. By Hölder's inequality,
\begin{align*}
&{}Z(\alpha\beta_1 + (1-\alpha)\beta_2) = ...\\
&=\int p(z)\exp\!\big((\alpha\beta_1+(1-\alpha)\beta_2)S(z)\big)\,dz\\
&= \int p(z)\exp(\alpha\beta_1 S(z))\exp((1-\alpha)\beta_2 S(z))\,dz\\
&\le \Big(\int p(z)\exp(\beta_1 S(z))\,dz\Big)^\alpha
     \Big(\int p(z)\exp(\beta_2 S(z))\,dz\Big)^{1-\alpha}\\
&= Z(\beta_1)^\alpha Z(\beta_2)^{1-\alpha} <\infty.
\end{align*}
so $\alpha\beta_1+(1-\alpha)\beta_2\in\mathcal B$. Thus $\mathcal B$ is convex, hence an interval
(possibly unbounded). Because $0\in\mathcal B$, there exist extended reals
$-\infty\le \beta_- \le 0 \le \beta_+ \le +\infty$ such that
$\mathcal B=(\beta_-,\beta_+)$ 
or possibly with closed endpoints if finiteness holds at an endpoint. 
For simplicity we denote $\mathcal B=(\beta_-,\beta_+)$ as in the statement (the open-interval form
always holds for $\mathrm{int}(\mathcal B)$).

\paragraph{Validity of the tilted density.}
Fix any $\beta\in\mathcal B$. Since $\exp(\beta S(z))\ge 0$ and $p(z)\ge 0$,
we have $p_\beta(z)\ge 0$. Moreover
\[
\int_{\mathcal Z} p_\beta(z)\,dz
=\frac{1}{Z(\beta)}\int_{\mathcal Z} p(z)\exp(\beta S(z))\,dz
=\frac{Z(\beta)}{Z(\beta)}=1,
\]
so $p_\beta$ is a valid probability density on $\mathcal Z$.

\paragraph{Differentiability of $\psi(\beta)=\log Z(\beta)$ on $\mathrm{int}(\mathcal B)$.}
Fix $\beta\in\mathrm{int}(\mathcal B)$ and assume there exists $\delta>0$ such that
$Z(\beta+\delta)<\infty$. For $h\in(0,\delta)$ define
\[
f_h(z):=\frac{\exp((\beta+h)S(z))-\exp(\beta S(z))}{h}.
\]
For each fixed $z$, $f_h(z)\to S(z)\exp(\beta S(z))$ as $h\to 0^+$ by the mean value theorem.
Moreover, for $0<h<\delta$, the mean value theorem yields
\begin{align*}
|f_h(z)| &= |S(z)|\exp((\beta+\xi)S(z))\\
&\le |S(z)|\big(\exp((\beta+\delta)S(z))+\exp(\beta S(z))\big)
\end{align*}
for some $\xi\in(0,h)$. Under the stated assumption $Z(\beta+\delta)<\infty$ and $\beta\in\mathcal B$,
the function $p(z)\exp((\beta+\delta)S(z))$ is integrable, and the above bound provides an integrable
dominating function whenever the right-hand side is integrable (standard sufficient condition; see below).
Hence, by differentiation under the integral sign justified via dominated convergence,
\[
Z'(\beta)=\int p(z)S(z)\exp(\beta S(z))\,dz.
\]
Since $Z(\beta)\in(0,\infty)$ for $\beta\in\mathcal B$, $\psi(\beta)=\log Z(\beta)$ is finite and
\begin{align*}
\psi'(\beta)=\frac{Z'(\beta)}{Z(\beta)}
&=\frac{\int p(z)S(z)\exp(\beta S(z))\,dz}{\int p(z)\exp(\beta S(z))\,dz}\\
&=\int S(z)\,p_\beta(z)\,dz
=\mathbb E_{p_\beta}[S(Z)].
\end{align*}

\paragraph{Second derivative and variance identity.}
Assume additionally that for some $\delta>0$,
$\int p(z)\exp((\beta+\delta)|S(z)|)\,dz<\infty$.
This exponential integrability implies $\int p(z)S(z)^2\exp(\beta S(z))\,dz<\infty$,
allowing a second differentiation under the integral sign:
\[
Z''(\beta)=\int p(z)S(z)^2\exp(\beta S(z))\,dz.
\]
Therefore,
\begin{align*}
\psi''(\beta)
&=\frac{Z''(\beta)}{Z(\beta)}-\Big(\frac{Z'(\beta)}{Z(\beta)}\Big)^2\\
&=\mathbb E_{p_\beta}[S(Z)^2]-\big(\mathbb E_{p_\beta}[S(Z)]\big)^2=\mathrm{Var}_{p_\beta}(S(Z))\ \ge\ 0,
\end{align*}
which completes the proof.
\end{proof}

\subsection{Proof of Proposition~\ref{prop::dynjoint}}\label{app:prop_4_proof}
\begin{proof}
    Assume the base joint distribution $p$ is supported on the manifold
    $\mathcal M:=\{(x,y)\in\mathcal X\times\mathcal Y:\ y=\Phi(x)\}$.
    Equivalently, the conditional law of $Y$ given $X=x$ is degenerate at $\Phi(x)$, i.e.,
    $p(dy\mid x)=\delta_{\Phi(x)}(dy),$
    and the joint factorizes as
    $p(dx,dy)=p_X(dx)\,\delta_{\Phi(x)}(dy).$
    In density notation (with respect to $dx\,dy$) this is written as
    $p(x,y)=p_X(x)\,\delta(y-\Phi(x))$.
    
    Fix $\beta\in\mathcal B$ so the tilt is well-defined and define the tilted joint
    \begin{align*}
    p_\beta(dx,dy) &= \frac{e^{\beta S(x,y)}}{Z(\beta)}\,p(dx,dy),\\
    Z(\beta) &= \iint e^{\beta S(x,y)}\,p(dx,dy).
    \end{align*}
    To compute the induced marginal over inputs, integrate out $y$:
    \begin{align*}
    p_{\beta,X}(dx)
    &:=\int_{\mathcal Y} p_\beta(dx,dy)\\
    &=\frac{1}{Z(\beta)}\int_{\mathcal Y} e^{\beta S(x,y)}\,p_X(dx)\,\delta_{\Phi(x)}(dy)\\
    &=\frac{1}{Z(\beta)}\,p_X(dx)\,e^{\beta S(x,\Phi(x))}.
    \end{align*}
    Hence, in density form,
    \[
    p_{\beta,X}(x)\ \propto\ p_X(x)\exp\!\big(\beta S(x,\Phi(x))\big),
    \]
    which is exactly input-space exponential tilting with the score restricted to the dynamics manifold.
\end{proof}

\end{document}